\def\BibTeX{{\rm B\kern-.05em{\sc i\kern-.025em b}\kern-.08em
    T\kern-.1667em\lower.7ex\hbox{E}\kern-.125emX}}
\DeclareMathOperator*{\argmin}{arg\,min}
\LetLtxMacro\orgvdots\vdots
\LetLtxMacro\orgddots\ddots
\DeclareRobustCommand\vdots{%
  \mathpalette\@vdots{}%
}
\newcommand*{\@vdots}[2]{%
  \sbox0{$#1\cdotp\cdotp\cdotp\m@th$}%
  \sbox2{$#1.\m@th$}%
  \vbox{%
    \dimen@=\wd0 %
    \advance\dimen@ -3\ht2 %
    \kern.5\dimen@
    \dimen@=\wd2 %
    \advance\dimen@ -\ht2 %
    \dimen2=\wd0 %
    \advance\dimen2 -\dimen@
    \vbox to \dimen2{%
      \offinterlineskip
      \copy2 \vfill\copy2 \vfill\copy2 %
    }%
  }%
}
\DeclareRobustCommand\ddots{%
  \mathinner{%
    \mathpalette\@ddots{}%
    \mkern\thinmuskip
  }%
}
\newcommand*{\@ddots}[2]{%
  \sbox0{$#1\cdotp\cdotp\cdotp\m@th$}%
  \sbox2{$#1.\m@th$}%
  \vbox{%
    \dimen@=\wd0 %
    \advance\dimen@ -3\ht2 %
    \kern.5\dimen@
    \dimen@=\wd2 %
    \advance\dimen@ -\ht2 %
    \dimen2=\wd0 %
    \advance\dimen2 -\dimen@
    \vbox to \dimen2{%
      \offinterlineskip
      \hbox{$#1\mathpunct{.}\m@th$}%
      \vfill
      \hbox{$#1\mathpunct{\kern\wd2}\mathpunct{.}\m@th$}%
      \vfill
      \hbox{$#1\mathpunct{\kern\wd2}\mathpunct{\kern\wd2}\mathpunct{.}\m@th$}%
    }%
  }%
}
\begin{document}

\title{RingCNN: Exploiting Algebraically-Sparse Ring Tensors for Energy-Efficient CNN-Based Computational Imaging
\thanks{This work was supported by the Ministry of Science and Technology, Taiwan, R.O.C., under Grant no. MOST 109-2218-E-007-034.}
}

\author{\IEEEauthorblockN{Chao-Tsung Huang}
\IEEEauthorblockA{\textit{Department of Electrical Engineering} \\
\textit{National Tsing Hua University}\\
HsinChu, Taiwan, R.O.C. \\
chaotsung@ee.nthu.edu.tw}
}

\maketitle
\thispagestyle{firstpage}
\pagestyle{restpage}

\begin{abstract}
In the era of artificial intelligence, convolutional neural networks (CNNs) are emerging as a powerful technique for computational imaging.
They have shown superior quality for reconstructing fine textures from badly-distorted images and have potential to bring next-generation cameras and displays to our daily life.
However, CNNs demand intensive computing power for generating high-resolution videos and defy conventional sparsity techniques when rendering dense details.
Therefore, finding new possibilities in regular sparsity is crucial to enable large-scale deployment of CNN-based computational imaging.

In this paper, we consider a fundamental but yet well-explored approach---\textit{algebraic sparsity}---for energy-efficient CNN acceleration.
We propose to build CNN models based on \textit{ring algebra} that defines multiplication, addition, and non-linearity for $n$-tuples properly.
Then the essential sparsity will immediately follow, e.g. $n$-times reduction for the number of real-valued weights.
We define and unify several variants of ring algebras into a modeling framework, \textit{RingCNN}, and make comparisons in terms of image quality and hardware complexity.
On top of that, we further devise a novel ring algebra which minimizes complexity with component-wise product and achieves the best quality using \textit{directional ReLU}.
Finally, we design an accelerator, \textit{eRingCNN}, to accommodate to the proposed ring algebra, in particular with regular ring-convolution arrays for efficient inference and on-the-fly directional ReLU blocks for fixed-point computation.
We implement two configurations, $n=2$ and $4$ (50\% and 75\% sparsity), with 40~nm technology to support advanced denoising and super-resolution at up to 4K UHD 30 fps.
Layout results show that they can deliver \textit{equivalent} 41~TOPS using 3.76 W and 2.22 W, respectively.
Compared to the real-valued counterpart, our ring convolution engines for $n=2$ achieve 2.00$\times$ energy efficiency and 2.08$\times$ area efficiency with similar or even better image quality.
With $n=4$, the efficiency gains of energy and area are further increased to 3.84$\times$ and 3.77$\times$ with only 0.11~dB drop of peak signal-to-noise ratio (PSNR).
The results show that RingCNN exhibits great architectural advantages for providing near-maximum hardware efficiencies and graceful quality degradation simultaneously.
\end{abstract}

\begin{IEEEkeywords}
convolutional neural network, computational imaging, regular sparsity, hardware accelerator
\end{IEEEkeywords}

\section{Introduction}

Convolutional neural networks (CNNs) have demonstrated their superiority in the fields of computer vision and computational imaging.
The former includes object recognition \cite{vggnet_2015,resnet_2016} and detection \cite{detection_review_2019}.
The latter involves image denoising \cite{DnCNN_2017,FFDNet_2018}, super-resolution (SR) \cite{VDSR_2016,SRResNet_2017,EDSR_2017,WDSR_2019}, and style transfer \cite{st_sr_2016,cyclegan_2017}; in particular, denoising is the key to enhance low-light photography on mobile phones, and SR plays an important role for displaying lower-resolution contents on ultra-high-resolution (UHD) TVs.
Although CNNs can be applied in both application fields, the computation schemes are quite different and so are their design challenges.

Recognition and detection CNNs aim to extract high-level features and usually process small images with a huge amount of parameters.
In contrast, computational imaging ones need to generate low-level and high-precision details and often deal with much larger images with fewer parameters.
For example, the state-of-the-art FFDNet for denoising \cite{FFDNet_2018} requires only 850K weights but can be used to generate 4K UHD videos at 30 fps.
This will demand as high as 106 TOPS (tera operations per second) of computation, and the precision of multiplications could be at least 8-bit for representing sufficient dynamic ranges.
Therefore, for computational imaging it is the intensive computation for rendering \textit{fine-textured}, \textit{high-throughput}, and \textit{high-precision} feature maps to pose challenges for the deployment in consumer electronics.

Exploiting sparsity in computation is a promising way to reduce complexity for CNNs.
Many approaches have been analyzed in detail, but most of them are discussed only for recognition and detection.
The most common one is to explore natural sparsity for feature maps \cite{cnvlutin_2016,diffy_2018} and/or filter weights \cite{scnn_2017,sparten_2019,smartexchange_2020}.
Utilizing such sparsity, like unstructured pruning \cite{deepcompression_2016}, will induce computation irregularity and thus significant hardware overheads.
For example, the state-of-the-art SparTen \cite{sparten_2019} only delivers 0.43~TOPS/W on 45~nm technology for the dedicated designs to tame irregularity.
If, instead, structured pruning \cite{sparsegran_2017,rewind_prune_2020} is applied to improve regularity, model quality will then drop quickly. 
Thus, natural sparsity is hard to support high-throughput inference with low power consumption.

Another common approach is to explore the low-rank sparsity in over-parameterized CNNs by either decomposition \cite{tt_lowrank_2011,cp_lowrank_2015,tucker_lowrank_2016} or model structuring \cite{mobilenet_2017,squeezenet_2017,mobilenetv2_2018,tie_2019}.
It aims high compression ratios and approximates weight tensors by regular but radically-changed inference structures.
This low-rank approximation works well for recognition CNNs which extract high-level features.
But it could quickly deteriorate the representative power of computational imaging ones for generating local details.
For example, merely applying depth-wise convolution can lead to 1.2~dB of peak signal-to-noise ratio (PSNR) drop for SR networks \cite{eCNN_2019}.
Therefore, low-rank sparsity may not be suitable for fine-textured CNN inference.

A recent alternative for providing regular acceleration is to enforce full-rank sparsity on matrix-vector multiplications \cite{circnn_2017,shufflenet_2018,hadanet_2019}.
It partitions them into several $n\times n$ sub-block multiplications and then replaces each one by a component-wise product between $n$-tuples.
This is equivalent to a group convolution with data reordering \cite{hadanet_2019}; therefore, for restoring representative power additional pre-/post-processing is required to mix information between components or groups.
CirCNN \cite{circnn_2017} equivalently applies Fourier transform on each sub-block for this purpose by forcing weight matrices to be block-circulant.
ShuffleNet \cite{shufflenet_2018} instead performs global channel shuffling, and HadaNet \cite{hadanet_2019} adopts simpler Hadamard transform.
However, the applicability of this approach is unclear for computational imaging because CirCNN aims very high compression ratios (66$\times$ for AlexNet \cite{alexnet_2012}), and ShuffleNet and HadaNet focus only on bottleneck convolutions.

Lastly, a fundamental but less-discussed approach is to exploit \textit{algebraic sparsity}.
In contrast to using real numbers, CNNs can also be constructed by complex numbers \cite{dcn_2018} or quaternions \cite{dqn_2018,qcnn_2019,qcnn_eccv_2018}.
By their nature, the number of real-valued weights can decrease two or four times, respectively.
Moreover, their multiplications can be accelerated by fast algorithms.
For example, the quaternion multiplication is usually expressed by a $4\times 4$ real-valued matrix and can be simplified into eight real-valued multiplications and some linear transforms \cite{q_complexity_1975}.
Regarding activation functions, the real-valued component-wise rectified linear unit (ReLU) is mostly adopted, and its efficiency over complex-domain functions is demonstrated in \cite{dcn_2018}.
Since this algebraic sparsity can reduce complexity with moderate ratios and high regularity, it is a good candidate for accelerating computational imaging.
However, previous work only discusses the two traditional division algebras and thus poses strict limitations for implementation.

In this paper, we would like to lay down a more generalized framework---\textit{RingCNN}---for algebraic sparsity to expand its design space for model-architecture co-optimization.
Observing that division is usually not required by CNN inference, we propose to construct models by \textit{ring}, a fundamental algebraic structure on $n$-tuples with definitions of multiplication and addition.
In particular, we consider a bilinear formulation for ring multiplication to have transform-based fast algorithms and thus include full-rank sparsity into this framework.
For constructing CNN models, we also equip non-linearity to the rings.
Then several ring variants are defined properly and compared systematically for joint quality-complexity optimization.

This algebraic generalization also brings architectural insights on ring non-linearity.
We observe that conventional methods mostly adopt the component-wise ReLU for non-linearity and use the linear transforms in ring multiplication for information mixing.
However, for fixed-point implementation these transforms will increase input bitwidths for the following component-wise products and bring significant hardware overheads. 
Inspired by this, we propose a ring with a novel \textit{directional ReLU} to serve both non-linearity and information mixing.
Then we can avoid the transforms before the products to eliminate the bitwidth-increasing overheads.
Extensive evaluations will show that the proposed ring can achieve not only the best hardware efficiency for multiplications but also the best image quality for its compact structure for training.

Finally, we design an accelerator---eRingCNN---to utilize the proposed ring for high-throughput and energy-efficient CNN acceleration.
For comparison purposes, we adopt eCNN \cite{eCNN_2019}, the state-of-the-art for computational imaging, as our architecture backbone.
Then we devise highly-parallel ring-convolution engines for efficient inference and simply replace the real-valued counterparts in eCNN thanks to their regularity and similarity in computation. 
For the directional ReLU which involves two transforms, conventional MAC-based accelerators may need to perform quantization before each transform and cause up to 0.2~dB of PSNR drop.
Instead, we apply an on-the-fly processing pipeline to avoid unnecessary quantization errors and facilitate fixed-point inference on 8-bit features.
With 40~nm technology, we implement two sparsity settings, $n=2$ and $4$, for eRingCNN to show the effectiveness.

In summary, the main contributions and findings of this paper are:
\begin{itemize}
\item We propose a novel modeling framework, RingCNN, to thoroughly explore algebraic sparsity.
The corresponding training process, including quantization, is also established for in-depth quality comparisons.
\item We propose a novel ring variant with a directional ReLU which achieves better image quality and area saving than complex field, quaternions, the rings alike to CirCNN and HadaNet, and all newly-discovered ones.
Its image quality even outperforms unstructured weight pruning and sometimes, when $n=2$, can be better than real field.
\item We design and implement accelerators with two configurations: eRingCNN-n2 (50\% sparsity) and eRingCNN-n4 (75\%).
They can deliver \textit{equivalent} 41~TOPS using only 3.76~W and 2.22~W, respectively, and support high-quality computational imaging at up to 4K UHD 30~fps.
\item Our ring convolution engines achieve near-maximum hardware efficiencies ($\cong n$).
Layout results show that for $n=2$ they have 2.00$\times$ energy efficiency and 2.08$\times$ area efficiency compared to the real-valued counterpart.
Those for $n=4$ can increase the corresponding efficiency gains to 3.84$\times$ and 3.77$\times$, respectively.
\item RingCNN models provide competitive image quality.
Compared to the real-valued models for eCNN, those for eRingCNN-n2 even have an average PSNR gain of 0.01~dB and those for eRingCNN-n4 only drop by 0.11~dB.
When serving Full-HD applications on eRingCNN, they can outperform the advanced FFDNet \cite{FFDNet_2018} for denoising and SRResNet \cite{SRResNet_2017} for SR.

\end{itemize}

\section{Motivation}
\label{sec:motivation}

We aim to enable next-generation computational imaging on consumer electronics by achieving high-throughput and high-quality inference with energy-efficient and cost-effective acceleration.
However, computational imaging CNNs require dense model structures to generate fine-textured details.
Thus before deploying any complexity-reducing method we need to examine the impact of image quality and the gain of computation complexity as a whole.

Without loss of generality, we demonstrate this quality-complexity tradeoff using the advanced model SRResNet as an example.
In Fig. \ref{fig:fig_motivation}, two conventional sparsity techniques are examined.
One is unstructured magnitude-based weight pruning for exploring natural sparsity.
It shows graceful quality degradation when compression ratios are up to 2$\times$, 4$\times$, and 8$\times$.
However, its irregular computation will erode the performance gain due to induced hardware overheads and load imbalance.
For example, only 11.7\% of power consumption and 5.6\% of area are spent on MACs in the sparse tensor accelerator SparTen \cite{sparten_2019}.
The other examined technique is depth-wise convolution (DWC) which exploits low-rank sparsity.
The quality drops very quickly and even can be worse than the old-fashioned VDSR \cite{VDSR_2016}.
As a result, weight pruning and DWC are unfavourable for computational imaging due to the computation irregularity and the quality distortion respectively.

A more straightforward approach is to reduce the model size in a compact way, and here we consider two cases: shrinking either model depth or feature channels.
For SRResNet, the depth reduction causes sharp quality loss.
In contrast, the channel reduction provides a good quality-complexity tradeoff which shows a similar trend as weight pruning and performs much better than DWC.
In particular, this approach maintains high computation regularity and can be accelerated by energy-efficient dense tensor accelerators, such as eCNN \cite{eCNN_2019} in which 94.0\% of power consumption and 72.8\% of area are spent on convolutions.
Therefore, the compact model configurations should also be considered before applying sparsity.

In this paper, we would like to explore the possibility of having the quality of weight pruning and the regularity of compact modeling at the same time.
We will approach this goal by using ring algebra for the elementary operations in CNNs.
In this way, we can achieve \textit{local sparsity} and assure \textit{global regularity} simultaneously.
Our results, RingCNN, for SRResNet are also shown in Fig. \ref{fig:fig_motivation} to demonstrate the effectiveness.
The details of our approach will be introduced in the following.

\begin{figure}[t]
\centering
\includegraphics[width=8.7cm]{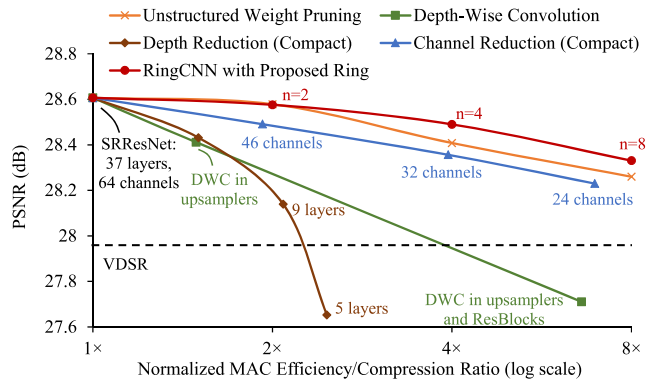} 
\caption{Computation efficiency versus image quality.
Different complexity-reducing methods are applied to SRResNet for four-times SR tasks (scaling up by four times for both of image width and height).
The models are trained using the same training strategy.
The image quality is measured by the averaged PSNR over test datasets Set5 \cite{set5}, Set14 \cite{set14}, BSD100 \cite{cbsd68}, and Urban100 \cite{urban100}.
}
\label{fig:fig_motivation}
\end{figure}

\section{Ring Algebra for Neural Networks}
\label{sec:ring_albegra}

Deep neural networks consist of many feed-forward layers.
These layers are usually defined over real field $\mathbb{R}$ and formulated by its three elementary operations: addition $+$, multiplication $\cdot\,$, and non-linearity $f$.
With tensor extensions, we can have a common formulation for each $l$-th layer:
\begin{align}
\bold{x}^{(l)}=\bold{f}^{(l)}(\bold{G}^{(l)}\bold{x}^{(l-1)}+\bold{b}^{(l)}),
 \label{equ:dnn_form}
\end{align}
where $\bold{x}^{(l-1)}$, $\bold{G}^{(l)}$, $\bold{b}^{(l)}$, and $\bold{f}^{(l)}$ represent the input feature tensor, weight tensor, bias tensor, and non-linear tensor operation respectively.
Conversely, as long as we define the three operations properly, we can construct neural networks at will using other algebraic structures.

An example for using complex field is shown in Fig. \ref{fig:fig_ring_example}.
Each complex number $z$ can be expressed by either a complex form $z_0+z_1 i$ or an equivalent 2-tuple
$\left(\begin{smallmatrix}
z_0\\
z_1
\end{smallmatrix}\right)$.
Then weight storage can be reduced by a half, and arithmetic computation can be accelerated by the complex multiplication algorithm, i.e.~the complexity for each complex multiplication can be reduced from four real multiplications to three.
In the following, we will first consider ring algebras to generalize this idea and define proper ring multiplication for discussion.
Then we will analyze their demands of hardware resources, and, finally, propose a novel ring variant with directional non-linearity to maximize hardware efficiency.

\begin{figure*}
\centering
\includegraphics[width=16cm]{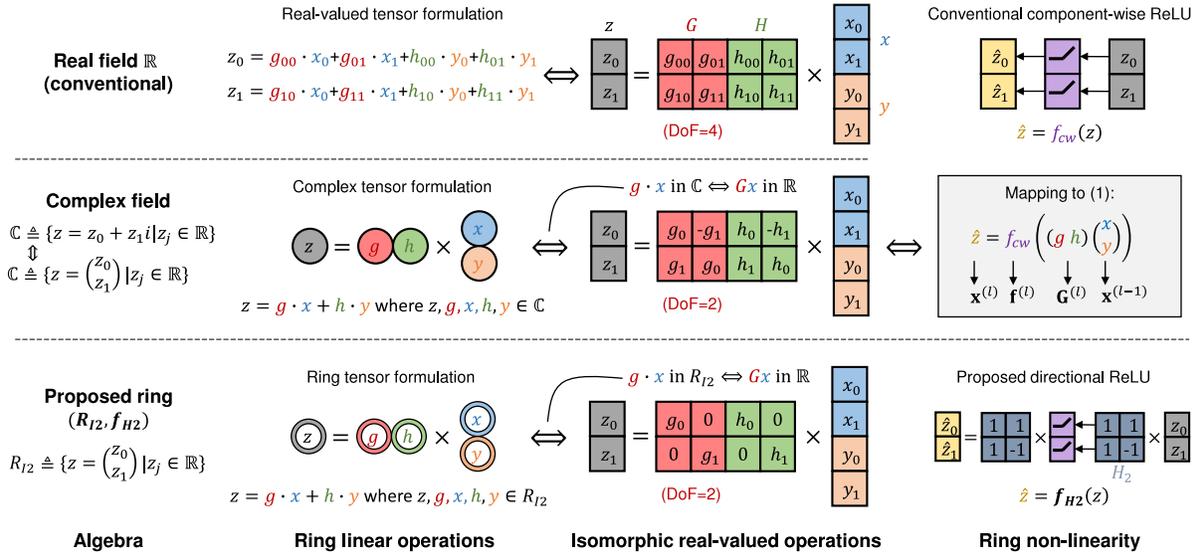} 
\caption{A simple neural-network layer for four real-valued inputs ($x_0,x_1,y_0,$ and $y_1$) and two real-valued outputs ($\hat{z}_0$ and $\hat{z}_1$)
by (top) real field $\mathbb{R}$, (middle) complex field $\mathbb{C}$, and (bottom) a proposed 2-tuple ring $(R_{I2},f_{H2})$.
For the latter two algebras, the inputs are equivalently two 2-tuples ($x$ and $y$) and the output becomes one 2-tuple ($\hat{z}$).
Their tensor formulations, $z=
\left(\begin{smallmatrix}
g & h 
\end{smallmatrix}\right)
\left(\begin{smallmatrix}
x \\
y 
\end{smallmatrix}\right)
\mbox{ in } \mathbb{C} \mbox{ or } R_{I2}$, have isomorphic operations in real field: $z=Gx+Hy \mbox{ in } \mathbb{R}$.
Then the degrees of freedom (DoF) in real numbers for each weight sub-matrix, e.g. $G$, are reduced from four ($g_{00},g_{01},g_{10},$ and $g_{11}$) to two ($g_0$ and $g_1$).
}
\label{fig:fig_ring_example}
\end{figure*}

\subsection{Ring Algebra}
\label{ssec:ring_algebra}

A ring $R$ is a fundamental algebraic structure which is a set equipped with two binary operations $+$ and $\cdot\,$.
Here we consider the set of real-valued $n$-tuples, i.e.\ $R=\left\lbrace x=(x_0,...,x_{n-1})^t \mid x_i \in \mathbb{R} \right\rbrace$.
For clarity, $x$ is a ring \textit{element} and $x_i$ is its real-valued \textit{component}.
And we simply use the component-wise vector addition for the ring addition $+$.

As for ring multiplication $\cdot\,$, it plays an important role for the properties of different rings.
Given
\begin{align}
z = g \cdot x
 \label{equ:ring_mul}
\end{align}
where $z,g,x\in R$, we consider it has a bilinear form to have a general formulation for fast algorithms, which will be discussed in Section \ref{ssec:fast_ringmul}.
In particular, the components of the three ring elements are related by
\begin{align}
z_i = \sum_{j=0}^{n-1} \sum_{k=0}^{n-1} M_{ikj} g_k  x_j,
 \label{equ:ring_bilinear}
\end{align}
where $M$ is a 3-D indexing tensor with only $1$, $0$, and $-1$ as its entries.
In other words, the products of input ring components in form of $g_k x_j$ are distributed to output components $z_i$ through $M_{ikj}$.
With the bilinear form (\ref{equ:ring_bilinear}), the ring multiplication (\ref{equ:ring_mul}) will be isomorphic to a matrix-vector multiplication
\begin{align}
z = G x,
 \label{equ:ring_iso_mul}
\end{align}
where the matrix $G$ has entries $G_{ij}=\sum_{k=0}^{n-1} M_{ikj} g_k $.
Without loss of generality, we will use $g$ for filter weights and $x$ for feature maps in the following.

After having definitions of $+$ and $\cdot\,$, we still need to define a unary non-linear operation $f$ for a ring to construct neural networks.
A conventional choice, which is usually adopted by previous methods for full-rank or algebraic sparsity, is a component-wise ReLU
\begin{align}
f_{cw}(x)=(\max(0,x_0),...,\max(0,x_{n-1}))^t, \label{equ:fcw}
\end{align}
where $\max(0,\cdot)$ is the commonly-used real-valued ReLU.

\subsection{Fast Ring Multiplication}
\label{ssec:fast_ringmul}

Now we will integrate transform-based full-rank sparsity into this framework.
For the bilinear-form ring multiplication (\ref{equ:ring_bilinear}), from \cite{winograd_book_1980} we know that its optimal general fast algorithm over real field can be expressed by the following three steps
\begin{align}
\mbox{filter/data transform: }&\tilde{g}=T_g g, \tilde{x}=T_x x, \label{equ:ring_fast1} \\ 
\mbox{component-wise product: }& \tilde{z}=\tilde{g} \circ \tilde{x} \mbox{ (on }m\mbox{-tuples)}, \label{equ:ring_fast2} \\ 
\mbox{reconstruction transform: }& z=T_z \tilde{z},  \label{equ:ring_fast3}
\end{align}
where $T_g$ and $T_x$ are $m\times n$ transform matrices for $g$ and $x$ respectively, and $T_z$ is $n\times m$ for $z$.
And $\circ$ represents a component-wise product, i.e.~$\tilde{z}_i=\tilde{g}_i \tilde{x}_i$ for $i=0,1,...,m-1$ for the three $m$-tuples $\tilde{z}$, $\tilde{g}$, and $\tilde{x}$.
If the transform matrices involve only simple coefficients, e.g.~$\pm 1$ or $0$, then they can be implemented by adders, and the component-wise product will dominate computation complexity.
In particular, the number of real-valued multiplications can be reduced from the general $n^2$ for matrix $G$ to $m$ in (\ref{equ:ring_fast2}).
Therefore, the complexity of fast ring multiplication depends on how we decompose the indexing tensor $M$ or its isomorphic matrix $G$ into (\ref{equ:ring_fast1})-(\ref{equ:ring_fast3}).

When $G$ is diagonalizable over real field, i.e.~$G=T^{-1} D T$, this complexity can be minimized as $m=\mbox{rank}(G)$ for $\tilde{g}=\mbox{diag}(D)$.
The proof is given in Appendix \ref{ssec:decomp_g}.
In this perspective, a ring $R_H$ alike to HadaNet has a full-rank $G$, i.e.~$\mbox{rank}(G)=n$, which is diagonalized by Hadamard transform.
Another example is a ring $R_I$ equivalent to group convolution which applies component-wise products for a diagonal full-rank $G$, and its invertible $T$ is simply the identity matrix $I$.

In contrast, if $G$ is not diagonlizable over real field, we can instead apply the tensor rank decomposition for the indexing tensor $M$ as mentioned in \cite{q_complexity_1975}.
However, the complexity is usually larger than $\mbox{rank}(G)$ in this case, and the generic rank (grank) represents the lower bound for real-valued multiplications: 
$m \geq \mbox{grank}(M)$.
For example, the rotation matrix for complex field $\mathbb{C}$ leads to three real-valued multiplications as its $\mbox{grank}(M)=3$ while $\mbox{rank}(G) = 2$, and the circulant matrix in CirCNN also belongs to this category.
The related properties of $\mathbb{C}$ as well as $R_H$ and $R_I$ with ring dimension $n=2$ are as shown at the top of Table \ref{tab:tab_ring_properties}.

\begin{table*}
\centering
\caption{Properties of ring algebras.}
\includegraphics[width=18cm]{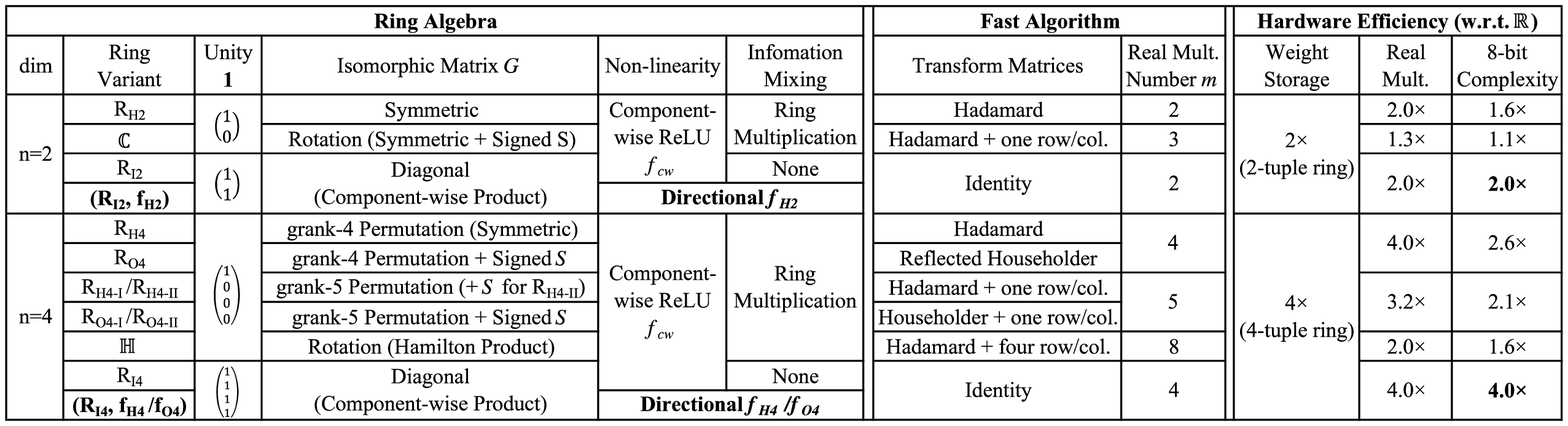}
\label{tab:tab_ring_properties}
\end{table*}

\subsection{Proper Ring Multiplication}
\label{ssec:proper_ring_mult}

In addition to the rings at hand, we would like to search more proper variants for in-depth analysis.
We make three practical assumptions to confine the scope of discussion.
The first one is exclusive sub-product distribution: each input sub-product $g_k x_j$ in (\ref{equ:ring_bilinear}) is distributed to one output component $z_i$ exclusively.
It provides complete and non-redundant information mixing between ring components for maintaining compact model capacity.
Then $G$ has full rank and can be formulated by a sign matrix $S$ and a permutation indexing matrix $P$:
\begin{align}
G_{ij} = S_{ij} g_{{\scriptscriptstyle P_{ij}}}, \label{equ:g_simple}
\end{align}
where $S_{ij}\in \{1, -1\}$, and each row or column of $P$ is a permutation of $\{0,1,...,n-1\}$.
For example, the rotation matrix
$
\big(\begin{smallmatrix}
g_0 & -g_1 \\
g_1 & g_0
\end{smallmatrix}\big)$
for $\mathbb{C}$ has
$S=
\big(\begin{smallmatrix}
1 & -1 \\
1 & 1
\end{smallmatrix}\big)$
and
$P=
\big(\begin{smallmatrix}
0 & 1 \\
1 & 0
\end{smallmatrix}\big)$.

Furthermore, given the existence of a ring unity $\mathbf{1}$, we consider the following explicit condition:
\begin{align}
G=
\left(\begin{smallmatrix}
g_0  \\
g_1 & g_0\\
\vdots & \mbox{ } & \ddots\\
g_{n-1} & \mbox{ } & \mbox{ } & g_0
\end{smallmatrix}\right)
\mbox{ and }
\mathbf{1}=
\left(\begin{smallmatrix}
1 \\
0 \\
\vdots \\
0
\end{smallmatrix}\right). \tag{C1} \label{equ:c1}
\end{align}
Without loss of generality, the condition on the first column of $G$ and $\mathbf{1}$ is drawn from the permutation definition of $P$ and $g \cdot \mathbf{1} = g$.
The diagonal of $G$ is then derived by $\mathbf{1} \cdot g = g$ which states that the isomorphic matrix of $\mathbf{1}$ is the identity matrix, and therefore $G=g_0I$ if $g=g_0 \mathbf{1}$.

The second assumption is \textit{commutativity}.
It is not necessary for constructing neural networks, e.g. quaternions $\mathbb{H}$ are not commutative.
But it is sufficient to enable the demanded \textit{associativity} for a ring, together with the exclusive sub-product distribution and an additional condition on commutative permutation.
The details are discussed in Appendix \ref{ssec:proof_comm}.
Then, by examining the matrix form $Gx=Xg$ for $g\cdot x = x\cdot g$, we have a cyclic-mapping condition for reducing candidates:
\begin{align}
\mbox{If } P_{ij}=j' \mbox{ , then } P_{ij'}=j \mbox{ and } S_{ij}=S_{ij'}.
\tag{C2} \label{equ:c2}
\end{align}

Finally, the last assumption is that a smaller $\mbox{grank}(M)$ is preferred for saving computations and leads to this rule:
\begin{align}
\mbox{Consider only } S\in \argmin_{S'} \mbox{grank}(M(S';P)).
\tag{C3} \label{equ:c3}
\end{align}
In practice, for each $P$ satisfying (\ref{equ:c1}) and (\ref{equ:c2}) we ran the CP-ARLS algorithm \cite{cp_arls_2017} in MATLAB to evaluate $\mbox{grank}(M(S';P))$ for all possible $S'$ and determined ring variants based on the results.

In the following, we consider moderate sparsity for computational imaging with $n=2 \mbox{ and } 4$.
We searched new ring variants as mentioned above and determined their transform matrices as discussed in Section \ref{ssec:fast_ringmul}.
Our findings are listed in Table \ref{tab:tab_ring_properties} where we distinguish ring symbols by indicating $n$ in the subscripts for clarity.
For $n=2$, only $R_{H2}$ and $\mathbb{C}$ can satisfy .
For $n=4$, we found, by exhaustion, that there are two such non-isomorphic permutations.
After applying (\ref{equ:c3}), the minimum $\mbox{grank}(M)$ of them is found to be 4 and 5.
The grank-4 permutation leads to two ring variants: $R_{H4}$ and $R_{O4}$ which are diagonalized respectively by Hadamard transform $H$ and a reflected Householder matrix $O=2 L_1 (I-2vv^t)$ where $L_1=\mbox{diag}(
\left(\begin{smallmatrix}
1 & -1 & -1 & -1
\end{smallmatrix}\right)^t
)$ and $v=\frac{1}{2}
\left(\begin{smallmatrix}
1 & 1 & 1 & 1
\end{smallmatrix}\right)^t
$.
On the other hand, there are four grank-5 ring variants.
Two of them, $R_{H4\mbox{\scriptsize{-I}}}$ and $R_{H4\mbox{\scriptsize{-II}}}$, have transform matrices related to $H$, and the other two, $R_{O4\mbox{\scriptsize{-I}}}$ and $R_{O4\mbox{\scriptsize{-II}}}$, are similarly connected to $O$.
In particular, $R_{H4\mbox{\scriptsize{-I}}}$ applies circular convolution as CirCNN and needs five real multiplications for complex Fourier transform.
The details of isomorphic $G$ and fast algorithms are summarized in Table \ref{tab:tab_ring_matrix_detail}.

\begin{table}
\centering
\caption{Details of isomorphic $G$ and fast algorithms.}
\includegraphics[width=8.7cm]{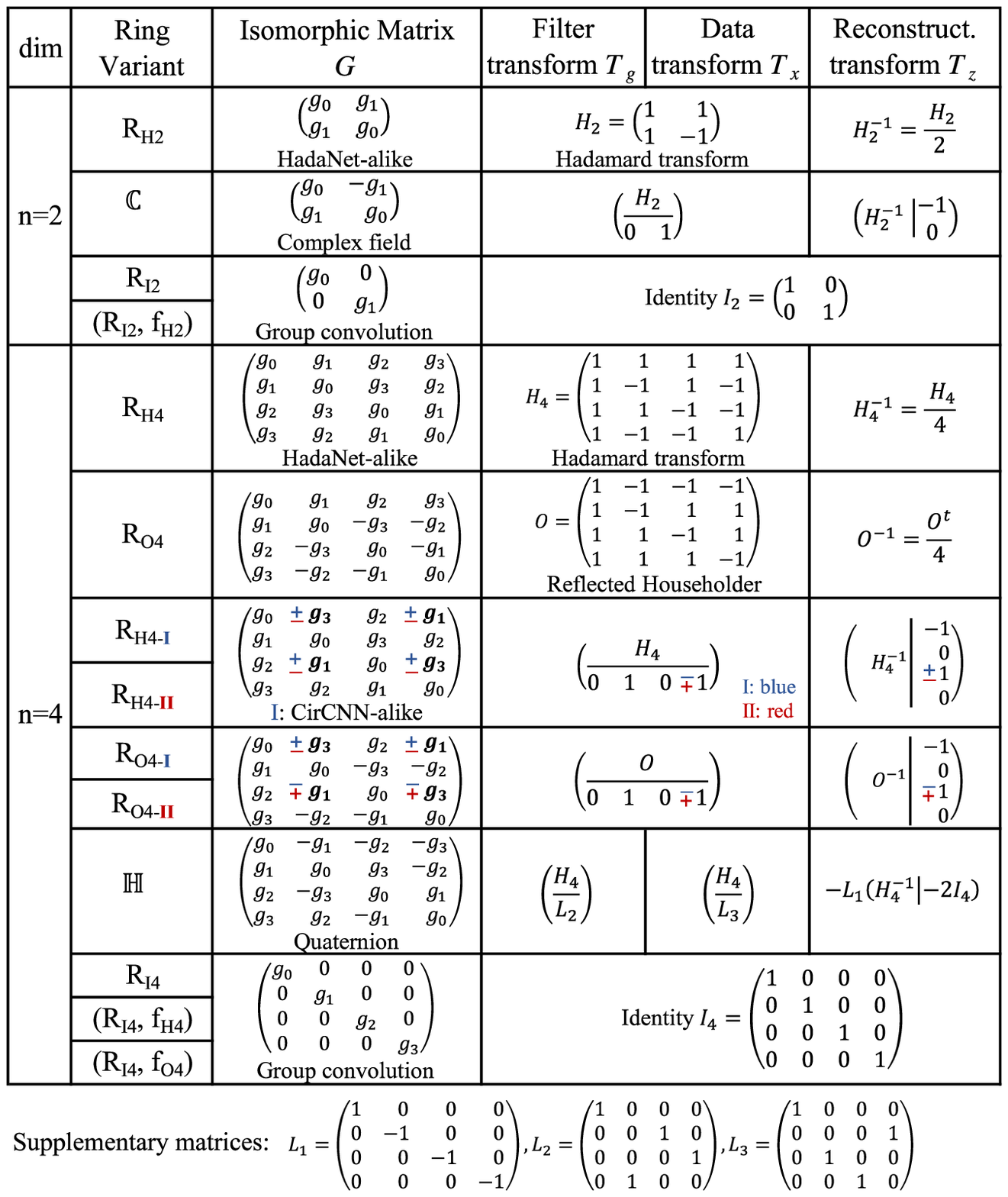}
\label{tab:tab_ring_matrix_detail}
\end{table}

\subsection{Hardware Efficiency}
\label{ssec:computation_property}


Now we can systematically examine the benefits of these rings in terms of hardware resources.
For concise hardware analysis, we assume that different algebraic structures have the same bitwidths for layer inputs and parameters.
Then the weight storage is directly proportional to the degrees of freedom (DoF), and the multiplier complexity can be evaluated on the same basis. 
Regarding the amount of filter weights, a real-valued network would require $n^2$ weights for an $n$-tuple pair of input and output features.
But using $n$-tuple rings instead will only need $n$ real-valued weights to represent the matrix $G$, i.e.\ DoF of $G$ is reduced from $n^2$ to $n$.
Therefore, the efficiency of weight storage with respect to the real-valued networks is $n\times$, e.g.\ $2\times$ and $4\times$ for 2- and 4-tuple rings respectively.
Similarly, the corresponding efficiency in terms of real-valued multiplications can be derived as $n^2/m$.
In Table \ref{tab:tab_ring_properties},  only $R_I$, $R_H$, and $R_{O4}$ can reach the maximum efficiency, $n\times$, for full-rank $G$.

More importantly, for practical implementation we need to consider fixed-point computation and include the bitwidths of the multiplications for precise evaluations.
Fig. \ref{fig:fig_fixedpt_fastalg} shows such an example for the fast algorithm (\ref{equ:ring_fast1})-(\ref{equ:ring_fast3}).
The main overheads brought by the transforms are the increased bitwidths for $\tilde{x}$ and $\tilde{g}$, e.g.\ $T_x$ and $T_g$ will transform $w$-bit $x$ and $g$ into wider $w_x$-bit $\tilde{x}$ and $w_g$-bit $\tilde{g}$.
The circuit complexity of a multiplier can be approximated by the product of its input bitwidths.
We further consider this factor, $w_x \times w_g$,  for evaluating the multiplier complexity for 8-bit features and weights as shown in the rightmost column of Table \ref{tab:tab_ring_properties}.
In this case, only $R_I$ can reach the maximum efficiency for using identity transforms, and the other rings all suffer the corresponding overheads induced by their transforms.
For example, $R_{H4}$ and $R_{O4}$ merely achieve $2.6\times$ efficiency which is $1.6\times$ worse than $R_{I4}$.

\begin{figure}
\centering
\includegraphics[width=7.1cm]{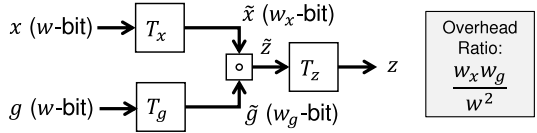} 
\caption{Fixed-point computation of fast algorithms for ring multiplication.
}
\label{fig:fig_fixedpt_fastalg}
\end{figure}

\subsection{Proposed Ring with Directional ReLU}

The above discussions only involve linear operations of neural networks.
For non-linearity, the component-wise ReLU $f_{cw}$ is conventionally adopted even when we actually operate on $n$-tuples.
As a result, $R_I$ will have the worst model capacity, although it has the best hardware efficiency.
It is because the information between different components of an $n$-tuple is not communicated or mixed, which is the same as the discussion on group convolution in \cite{shufflenet_2018}.
This is also the reason why we assumed the complete information mixing property for searching ring multiplications in Section \ref{ssec:proper_ring_mult}.
In the following, we apply algebraic-architectural co-design to have the hardware advantages of $R_I$ while recovering the model capacity.

By examining the fast algorithm (\ref{equ:ring_fast1})-(\ref{equ:ring_fast3}), we found that the information is in fact mixed by the transforms for data, $T_x$, and reconstruction, $T_z$.
In addition, for neural networks this should be required only near non-linearity because cascaded linear operations will simply degrade to another single linear operator.
Based on these two observations, we propose to mix information only before and after non-linearity and thus can adopt $R_I$ for linear operations to have its architectural benefits.
This proposal leads to a novel algebraic function for ring non-linearity: directional ReLU $f_{dir}(y)\triangleq U f_{cw}(Vy)$,
where $U$ and $V$ are two $n\times n$ matrices for an input $n$-tuple $y$.
It is equivalent to performing non-linearity in the directions of the row vectors of $V$, instead of the conventional standard axes, and then turning the axes to the column vectors of $U$.
Thus the components of an $n$-tuple are considered as a whole, not separately, for non-linearity.

The computation of $U$ and $V$ induces complexity overheads.
But they are only linearly proportional to the number of output channels, unlike the bitwidth-increased products in (\ref{equ:ring_fast2}) which grow quadratically.
To further reduce the overhead, we consider the simple Hadamard transform in Table \ref{tab:tab_ring_matrix_detail} and propose a novel ring $(R_I,f_H)$ with the directional ReLU as shown in Fig. \ref{fig:fig_fH}:
\begin{align}
f_{H}(y)\triangleq H f_{cw}(Hy).
 \label{equ:f_H}
\end{align}
For $n=4$, another similar variant $(R_{I4},f_{O4})$ with $f_{O4}(y)\triangleq O f_{cw}(Oy)$ is also possible.
They have the same hardware advantages as $R_I$ and possess better model capacity for additional information mixing.
Note that for constructing neural networks they are different from $R_H$ and $R_{O4}$, especially when skip connections exist or some convolutions are not followed by non-linearity.

\begin{figure}[t]
\centering
\includegraphics[width=7cm]{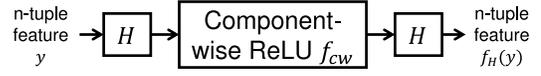} 
\caption{Proposed directional ReLU $f_H$. $H$: $n\times n$ Hadamard transform.
}
\label{fig:fig_fH}
\end{figure}

\section{R\lowercase{ing}CNN Modeling}
\label{sec:RingCNN_construct}

\subsection{Model Construction}

We propose a unified RingCNN framework to include all the considered rings for in-depth comparisons on quality-complexity tradeoffs.
By extending ring algebra to ring tensors $\bold{z}$, $\bold{g}$, and $\bold{x}$, we formulate a $K\times K$ ring convolution (RCONV):
\begin{align}
\bold{z}[p,q,c_o]=\sum_{s,t,c_i} \bold{g}[s,t,c_i,c_o] \cdot \bold{x}[p-s,q-t,c_i],
 \label{equ:rconv_direct}
\end{align}
where $c_o$ and $c_i$ are indexes for output and input $n$-tuple channels, $p$ and $q$ for feature positions, and $s$ and $t$ for weight positions.
Then a real-valued convolution layer, either with non-linearity or not, can be converted into an RCONV layer as shown from Fig. \ref{fig:fig_rconv}(a) to (b).
In this way, we can convert any existing real-valued model structure into an RingCNN alternative.

\begin{figure}
\centering
\includegraphics[width=8.7cm]{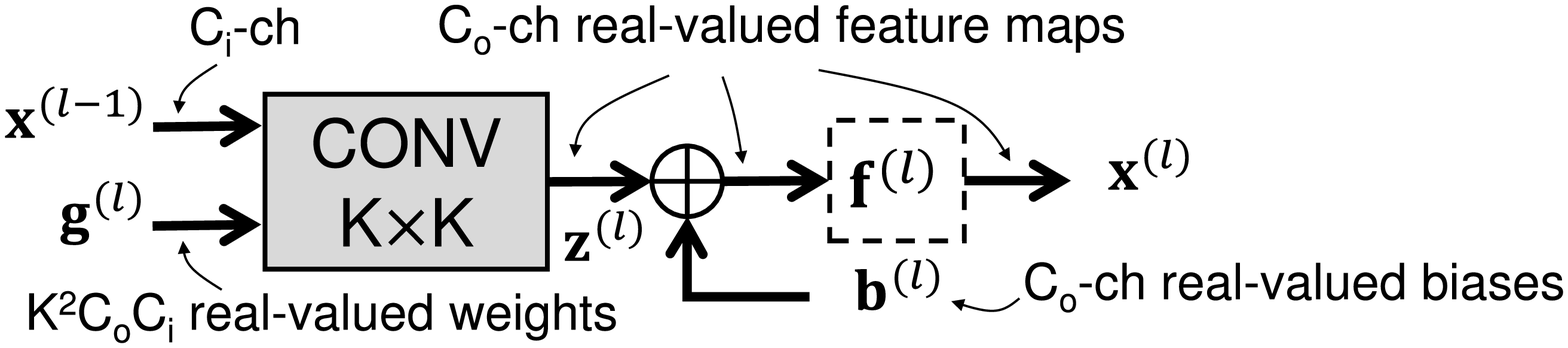} \\
(a) \\
\includegraphics[width=8.7cm]{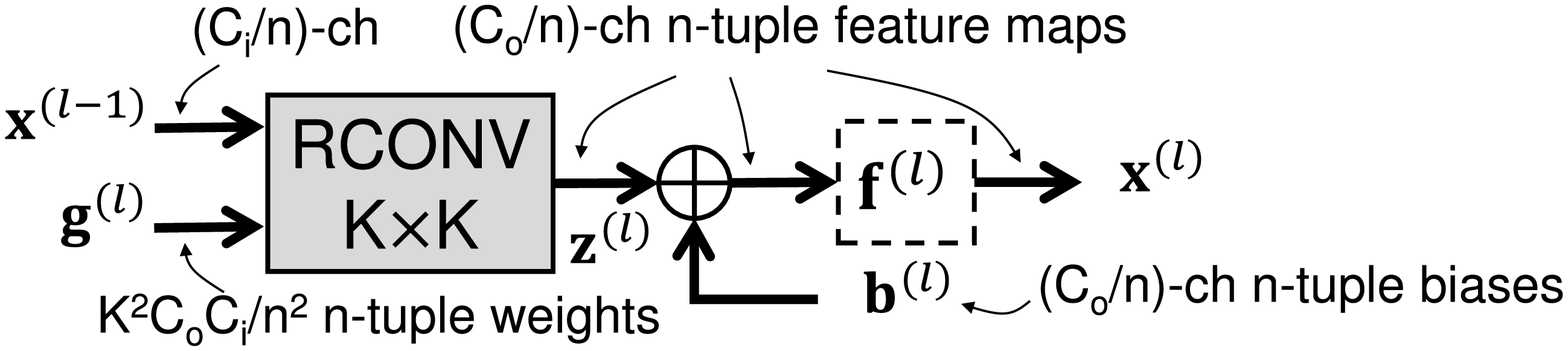} \\
(b) \\
\includegraphics[width=8.7cm]{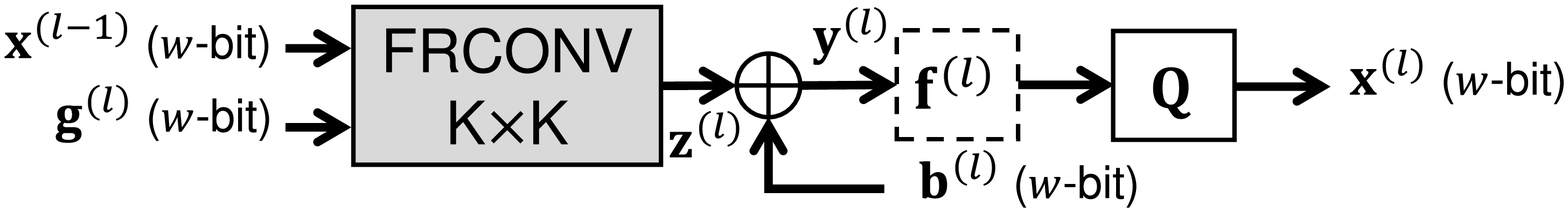} \\
(c)
\caption{$K\times K$ convolution layers with (a) real-valued tensors, (b) $n$-tuple ring tensors, and (c) efficient implementation.
$C_i$, $C_o$: number of real-valued input and output channels.
$\bold{f}$: non-linear tensor operation using element-wise $f$, which could appear or not (dash line) based on model structures. 
$\bold{Q}$: tensor quantization using element-wise quantization $Q$.
}
\label{fig:fig_rconv}
\end{figure}

\subsection{Model Training}

An RingCNN model can be treated as a conventional real-valued CNN if we implement it in form of the matrix-vector multiplication (\ref{equ:ring_iso_mul}).
Then the Backprop algorithm can flow gradients as usual without any special treatment.
For the completeness of ring algebras, we can also represent the gradients in terms of ring operations and then express Backprop using only the ring terminology.
For example, we have $\nabla_x L=G^t \nabla_z L$ from (\ref{equ:ring_iso_mul}) for a training loss $L$.
Then $\nabla_x L=g \cdot \nabla_z L$ for $R_I$, $R_H$, and $R_{O4}$ since $G$ is symmetric for them.
Similarly, the gradient $\nabla_x L$ equals to $g^c \cdot \nabla_z L$ for $R_{H4\mbox{\scriptsize{-I}}}$ and $g^* \cdot \nabla_z L$ for $\mathbb{H}$, where $g^c$ and $g^*$ represent circular folding and quaternion conjugate of $g$ respectively.
The same approach can be applied to express $\nabla_g L$ in ring operations.

\subsection{Efficient Implementation}
\label{ssec:ringcnn_model_opt}

\textbf{Dynamic fixed-point quantization.}
We prefer fixed-point computation for hardware implementation.
It has been shown effective to apply dynamic quantization with separate per-layer Q-formats \cite{qformat_arm_2001} for real-valued feature maps and parameters \cite{eCNN_2019}.
We found that this approach also works well for the RingCNN models that adopt the component-wise ReLU.
But when the directional ReLU is applied, image quality is deteriorated in many cases.
It is because after this non-linearity different ring components have different dynamic ranges, and using one single Q-format for them causes large saturation errors.
Therefore, for the directional ReLU we propose to use component-wise Q-formats for feature maps to address this issue. In other words, there are $n$ different feature Q-formats in one layer, and each component of $n$-tuple features follows its corresponding Q-format.

\textbf{Fast algorithm.}
We use the fast algorithm to formulate a fast ring convolution (FRCONV):
\begin{align}
\bold{z}[p,q,c_o]=T_z \left(  \sum_{s,t,c_i} \tilde{\bold{g}}[s,t,c_i,c_o] \circ \tilde{\bold{x}}[p-s,q-t,c_i] \right) ,
 \label{equ:rconv_fast}
\end{align}
where $\tilde{\bold{g}}$ and $\tilde{\bold{x}}$ are the ring tensors after the transforms $T_g$ and $T_x$ respectively.
For minimizing overheads, we avoid redundant transform operations by applying $T_g$, $T_x$, and $T_z$ only once for each of weight, input, and output ring elements respectively.
Then each RCONV layer can then be efficiently implemented in hardware by applying FRCONV to its fixed-point model as shown in Fig. \ref{fig:fig_rconv}(c).
Note that for $R_I$ FRCONV is the same as RCONV for its identity transform matrices.

\section{\lowercase{e}R\lowercase{ing}CNN Accelerator}
\label{sec:RingO}

To show the efficiency on practical applications, we further design an RingCNN accelerator, named eRingCNN, over the proposed ring $(R_I,f_H)$.
For supporting high-throughput computational imaging, we use the highly-parallel eCNN as a backbone architecture and simply replace its real-valued convolution engine by a corresponding one for RCONV.
This portability of linear operations is an advantage of algebraic sparsity, but we need a new and specific design for the directional ReLU.
We implement two sparsity settings for $n=2$ and $4$, and the details are introduced in the following.

\textbf{System diagram.}
Fig. \ref{fig:fig_eringcnn_system} presents the overall architecture.
In one cycle, it can compute (32/$n$)-channel $n$-tuple output features from (32/$n$)-channel $n$-tuple inputs for 4$\times$2 spatial positions.
For both 3$\times$3 and 1$\times$1 convolution engines, the number of MACs is reduced by 50\% and 75\% for the settings $n=2 \mbox{ and } 4$ respectively.
Similarly, the size of the weight memory can be reduced by the same ratios, e.g.~from 1280~KB in eCNN to 640~KB for $n=2$ and 320~KB for $n=4$.
However, for simplicity the parameter compression in eCNN was not implemented; instead, we increase the size by 1.5$\times$ to 960~KB and 480~KB, respectively, to support large models.
The rest architectural differences from eCNN are mainly on the designs of the RCONV engines and the novel directional ReLU.

\begin{figure}
\centering
\includegraphics[width=6.5cm]{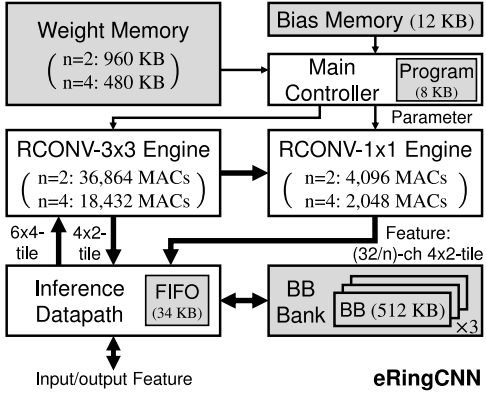}
\caption{System architecture of eRingCNN. (BB: image block buffer)
}
\label{fig:fig_eringcnn_system}
\end{figure}

\textbf{RCONV engine.}
To have local sparsity while maintaining global regularity, we increase the computing granularity from real numbers to $n$-tuple rings.
Fig. \ref{fig:fig_rconv_engine} shows such a modification for the 3$\times$3 convolution engine with $n=4$.
It is a channel-wise 2D computation array for 8-channel 4-tuple inputs and outputs.
Each of the $8\times 8$ computing units is responsible for the 2D 3$\times$3 ring convolution for the corresponding input-output pair with ring tensor weights $\bold{g}_{c_o c_i}\triangleq\bold{g}[\,:\,,\,:\,,c_i,c_o]$.
Thanks to $(R_I,f_H)$, it simply computes component-wise 2D convolutions for saving complexity.
Finally, a novel directional ReLU block, including dynamic quantization with component-wise Q-formats, is devised to replace their real-valued counterparts.

\begin{figure}
\centering
\includegraphics[width=8.7cm]{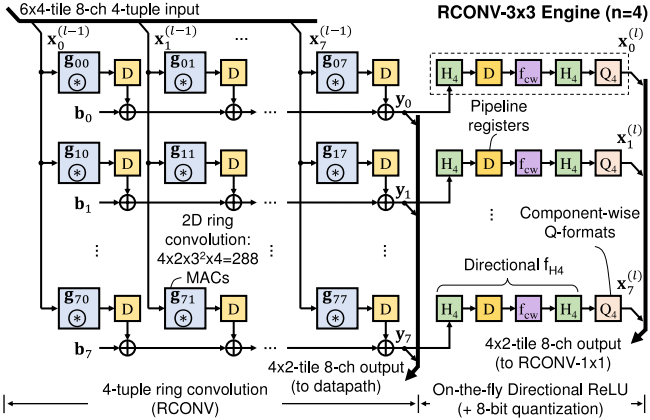}
\caption{RCONV engine for 3$\times$3 filters and 4-tuples. It processes eight 4-tuple input channels and generates eight 4-tuple output channels, which is equivalent to 32-channel real-valued inputs and outputs.
It has 64 computing units (blue boxes) in which $\circledast$ represents a 2D ring convolution unit which generates a 4$\times$2-tile in one cycle, i.e.~$\mathbf{y}_i$ represents the 4-tuples without linearity in 4$\times$2 spatial positions for the $i$-th output channel.
}
\label{fig:fig_rconv_engine}
\end{figure}

\textbf{Directional ReLU unit.}
It mixes information for RCONV outputs to recover model capacity; however, the mixing demands Hadamard transforms on high-bitwidth accumulated outputs, e.g.~24-bit for $n=4$.
This induces two issues for conventional accelerator architectures.
Firstly, the two transforms for $f_H$ are likely to be implemented by the same fixed-point MACs for convolutions to meet the high computation throughput.
But since the weights are only $-1$ and $1$, the hardware efficiency would be low for the multipliers.
Secondly and more importantly, the features before the Hadamard transforms will need to be quantized for the MACs.
We found that these additional quantizations, e.g.~24-bit to 8-bit for $n=4$, would cause up to 0.2~dB of PSNR drop for denoising and SR tasks.

Therefore, we propose an on-the-fly processing pipeline for this novel function, and Fig. \ref{fig:fig_dir_relu} shows our implementation for $n=4$.
It specifically implements the butterfly structures for Hadamard transforms to optimize hardware efficiency and keeps full-precision operations to preserve image quality.
In this case, the internal bitwidths are up to 33-bit, in which the component-wise Q-formats contribute 5-bit for aligning components (through the left-shifters).
This circuit is the major overhead for using $(R_I,f_H)$ and also appears in the inference datapath for the non-linearity after skip or residual connections.

\begin{figure}
\centering
\includegraphics[width=8.7cm]{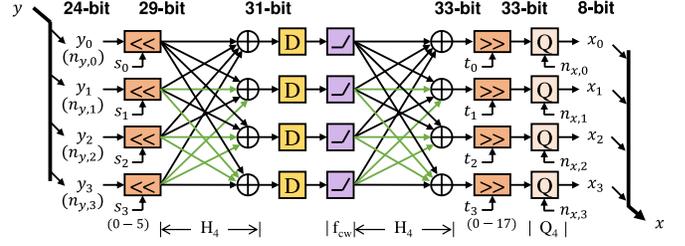}
\caption{On-the-fly directional ReLU for a 4-tuple (dashed box in Fig. \ref{fig:fig_rconv_engine}). The input is 
$y= (y_0, y_1, y_2, y_3)^t$, and the output $x = (x_0, x_1, x_2, x_3)^t$.
With component-wise Q-formats, their numbers of fractional bits are given as $n_{y,i}$ and $n_{x,i}$, respectively.
The numbers of shift bits are then derived as $s_i = \max_{i'} n_{y,i'} - n_{y,i}$ and 
$t_i = \max_{i'} n_{y,i'} - n_{x,i}$.
Green lines represent minus terms of the adders for Hadamard transform.
}
\label{fig:fig_dir_relu}
\end{figure}

\section{Evaluations}
\label{sec:eval}

We show extensive evaluations for (A) ring algebras, (B) image quality on eRingCNN, and (C) hardware performance of eRingCNN.
For clarity, the two sparsity configurations for eRingCNN are denoted by eRingCNN-n2 and eRingCNN-n4.

\subsection{Ring Algebras}

\textbf{Training setting and test datasets.}
For quality evaluations, we use the advanced ERNets for eCNN \cite{eCNN_2019} as the real-valued backbone models.
Then RingCNN models are converted from them as shown from Fig. \ref{fig:fig_rconv}(a) to (b).
To fairly compare RingCNNs and real-valued CNNs, we evaluate their best performance by increasing their initial learning rates as high as possible before training procedures become unstable.
Note that the real-valued ERNets in this paper will therefore perform better than those in \cite{eCNN_2019} because of using higher learning rates.
The models are trained using the lightweight settings as summarized in Table \ref{tab:tab_trainingsetting} if not mentioned.
Finally, we test denoising networks on datasets Set5 \cite{set5}, Set14 \cite{set14}, and CBSD68 \cite{cbsd68}, and super-resolution ones on Set5, Set14, BSD100 \cite{cbsd68}, and Urban100 \cite{urban100}.

\begin{table}
\centering
\caption{Training settings.}
\includegraphics[width=8.2cm]{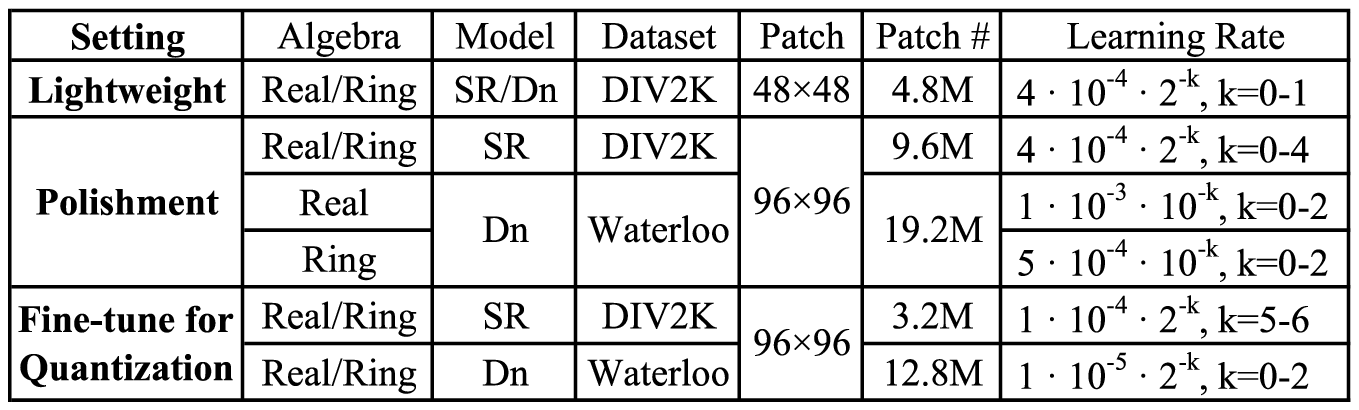}
\label{tab:tab_trainingsetting}
\end{table}

\textbf{Quality comparison for different rings.}
Fig. \ref{fig:fig_psnr_rings} compares image quality in PSNR for the rings in Table \ref{tab:tab_ring_properties}.
When the component-wise ReLU is used, $R_I$ performs the worst due to the lack of information mixing.
The two traditional algebra alternatives $\mathbb{C}$ and $\mathbb{H}$ also do not perform well, considering more real-valued multiplications are required.
Between the two grank-4 variants for $n=4$, the newly-discovered $R_{O4}$ performs better than the HadaNet-alike $R_{H4}$.
Similar results can be found for their corresponding grank-5 variants, e.g.\ the newly-discovered $R_{O4\mbox{\scriptsize{-I}}}$ better than the CirCNN-alike $R_{H4\mbox{\scriptsize{-I}}}$.
However, by using the directional ReLU, the proposed $(R_I, f_H)$ can give better quality and constantly outperform the others.
Since $(R_{I4}, f_{O4})$ shows inferior quality, we therefore focus on $(R_I, f_H)$ and adopt it for our implementation.

\begin{figure}
\centering
\includegraphics[width=8.7cm]{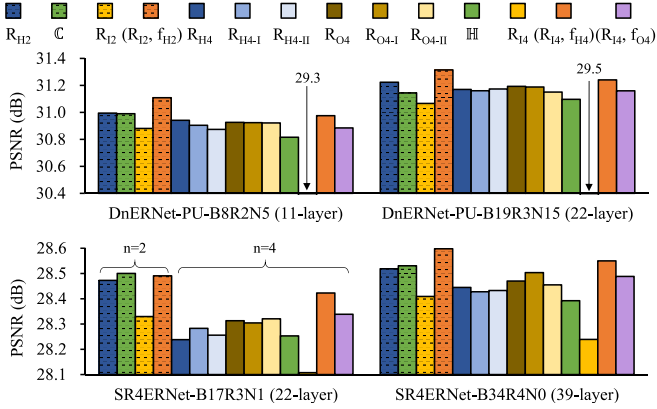} 
\caption{PSNR comparison of different rings with (top) denoising model structure DnERNet-PU (PU: pixel unshuffle) and (bottom) four-times super-resolution (SR$\times 4$) SR4ERNet.
The configurations are the same as the real-valued ERNets: ERModule number $B$, base pumping ratio $R$, and additional pumping layer number $N$.
(Hatch pattern: 2-tuples; solid color: 4-tuples.)
}
\label{fig:fig_psnr_rings}
\end{figure}

\textbf{Ablation study between $(R_I, f_H)$ and $R_H$.}
They share similar structures but have two major differences.
First, $(R_I, f_H)$ multiplies input features by weights $\bold{g}$ directly while $R_H$ does that after applying the filter transform.
Second, $(R_I, f_H)$ applies Hadamard transform only when non-linearity is required, but $R_H$ always does that and results in a redundant structure.
Therefore, $R_H$ can imitate $(R_I, f_H)$ by making up the differences: first training on transformed weights $\tilde{\bold{g}}$ and then modifying model structures accordingly.
Fig. \ref{fig:fig_ablation}(a) shows an example for modifying a residual block, and Fig. \ref{fig:fig_ablation}(b) illustrates typical PSNR results using two SR$\times$4 networks as examples.
Training on $\tilde{\bold{g}}$ is occasionally helpful, but structure modification improves image quality most of the time.
Therefore, the compact model structure is the main reason why $(R_I, f_H)$ outperforms $R_H$ for computational imaging.

\begin{figure}
\centering
\begin{tabular}{>{\centering\arraybackslash}m{3.6cm} >{\centering\arraybackslash}m{5cm}}
\includegraphics[width=4.3cm]{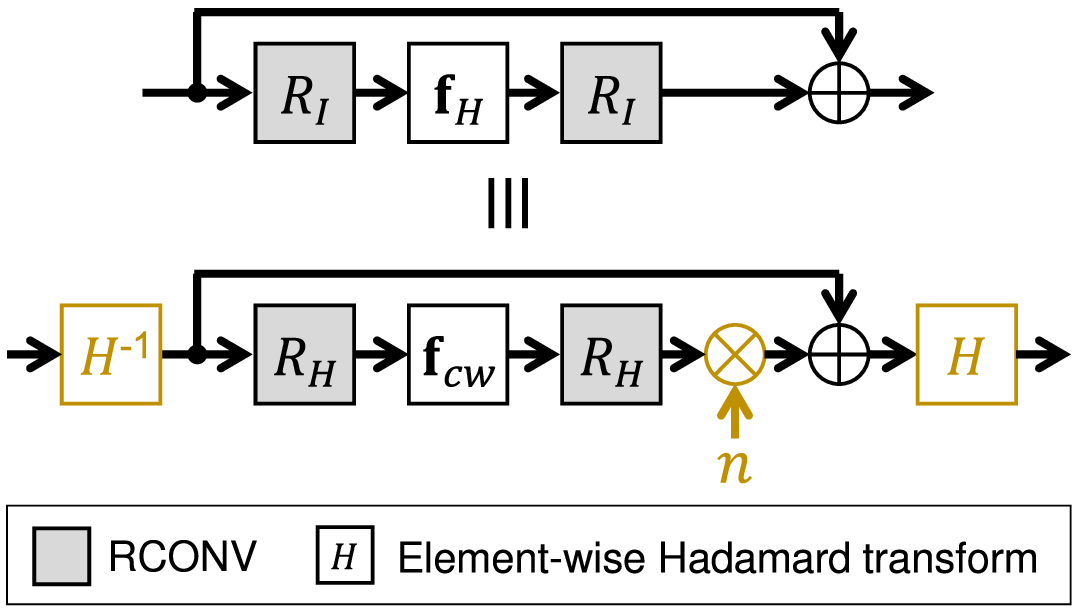} &
\includegraphics[width=4.5cm]{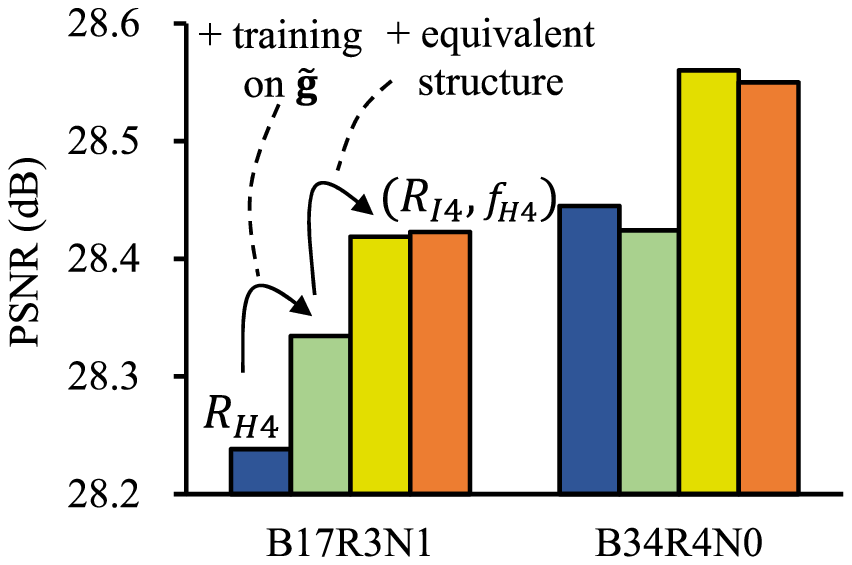} \\
$\qquad$(a) & (b)
\end{tabular}
\caption{Ablation study for $(R_I, f_H)$.
(a) Equivalent residual block structures for using (top) $(R_I, f_H)$ and (bottom) $R_H$.
(b) PSNR for two SR4ERNet model structures ($n$=4).
}
\label{fig:fig_ablation}
\end{figure}

\textbf{Comparison with weight pruning.}
We also compare image quality between the proposed algebraic sparsity and the unstructured magnitude-based weight pruning.
While RingCNNs are trained directly, real-valued CNNs are first pre-trained, then pruned, and finally fine-tuned.
Fig. \ref{fig:fig_ring_vs_pruning} shows the comparison results, and RingCNNs over $(R_I, f_H)$ can deliver better image quality than the weight pruing for compression ratios 2$\times$, 4$\times$, and 8$\times$.
In particular, the 2-tuple networks can even outperform the original (1$\times$) real-valued networks in many cases. This shows that the algebraic sparsity can serve strong prior for CNN models.
As a result, $(R_I, f_H)$ not only provides more regular structures but also achieves better quality for computational imaging.
A case for recognition tasks, though not the focus of this paper, is also studied in Appendix \ref{ssec:vs_legr}, where convolutions and corresponding non-linear functions are implemented with $(R_I, f_H)$, and batch normalization is remained as real-valued operations.

\begin{figure}
\centering
\begin{tabular}{>{\centering\arraybackslash}m{1.1cm} >{\centering\arraybackslash}m{7.3cm}}
\includegraphics[width=1.5cm]{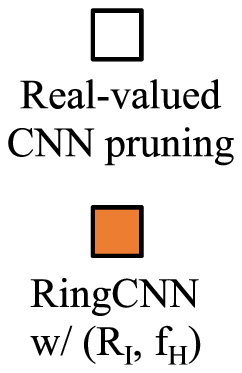} &

\begin{tabular}{c}
\includegraphics[width=6.8cm]{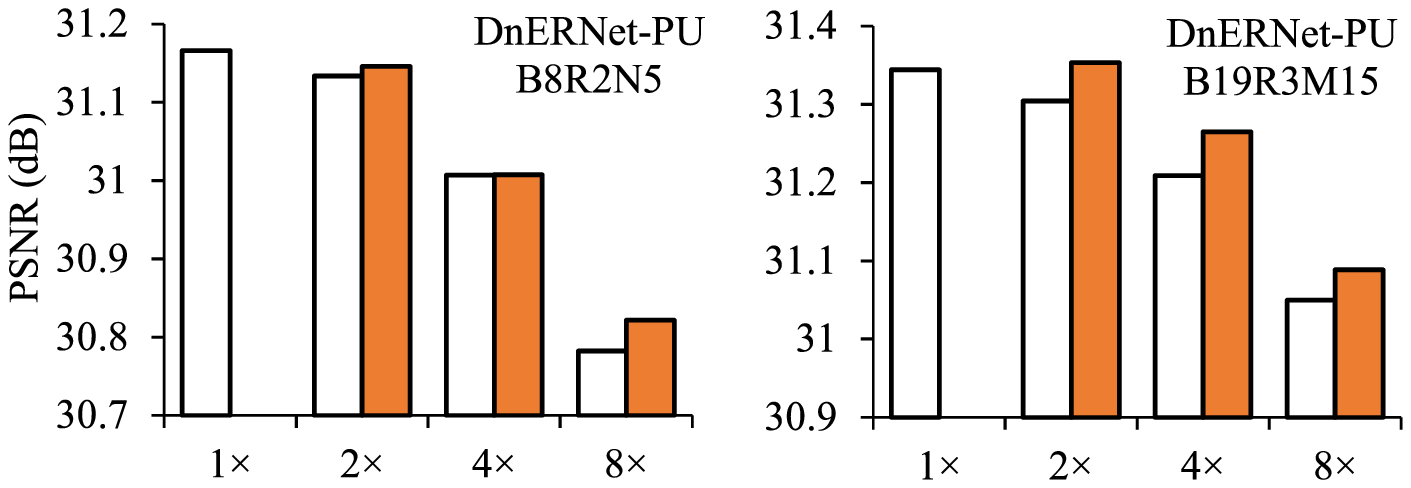} \\
\includegraphics[width=6.8cm]{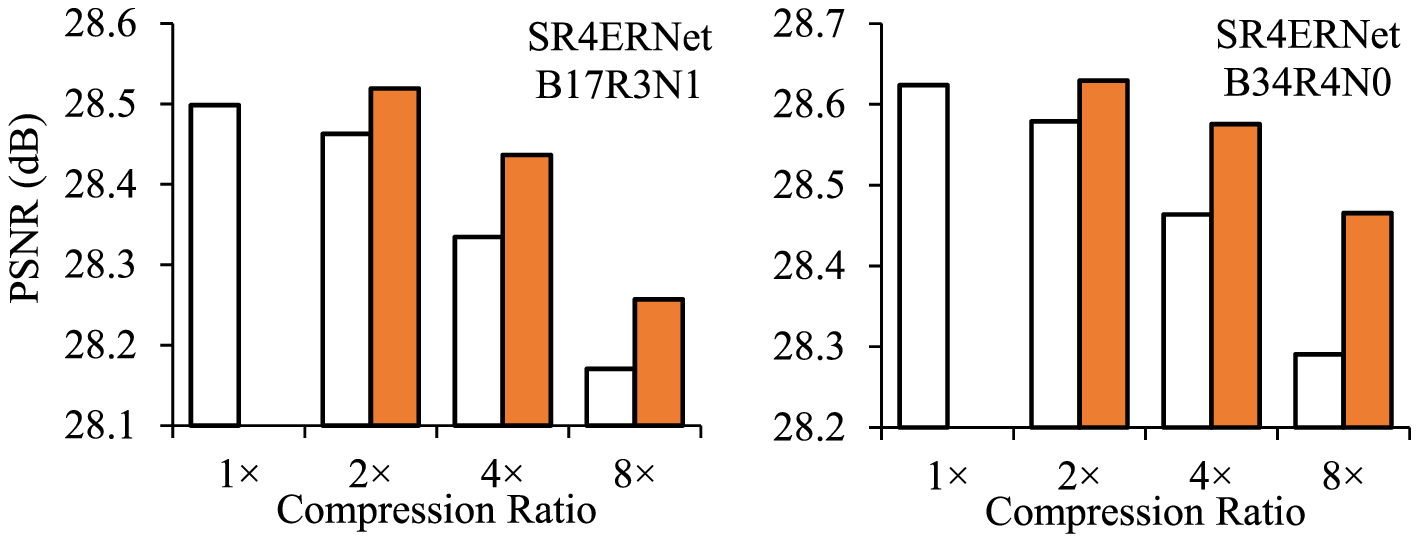} \\
\end{tabular}

\end{tabular}
\caption{Algebraically-sparse RingCNN versus unstructured weight pruning.
$(R_I, f_H)$: $n$=2, 4, and 8 for 2$\times$, 4$\times$, and 8$\times$ compressions.
We use 200 more epochs for fine-tuning the weight-pruned models and, for fair comparisons, 100 more epochs for the original (1$\times$) CNN and RingCNNs.
}
\label{fig:fig_ring_vs_pruning}
\end{figure}

\textbf{Fixed-point implementation.}
For comparisons in hardware efficiency, we implemented highly-parallel FRCONV engines, as depicted in Fig. \ref{fig:fig_rconv}(c) with non-linearity, for different rings.
Their RTL codes are synthesized with 40~nm CMOS technology.
For quality comparison, the models are quantized in 8-bit and then fine-tuned using the setting at the bottom of Table \ref{tab:tab_trainingsetting}.
The quality loss due to quantization is similar for each ring, and Fig. \ref{fig:fig_gate_vs_quality} shows the comparison results.
The area efficiencies are very close to the estimated 8-bit complexity in Table \ref{tab:tab_ring_properties} because convolutions dominate the areas.
The proposed $(R_I, f_H)$ can provide the smallest area and the best quality at the same time.
Compared to the CirCNN-alike $R_{H4\mbox{\scriptsize{-I}}}$ and HadaNet-alike $R_{H4}$, it has nearly 0.1~dB PSNR gain for the SR$\times$4 task and provides 1.8$\times$ and 1.5$\times$ area efficiencies respectively.
In summary, $R_I$ can save area efficiently, and $f_H$ can recover image quality significantly.

\begin{figure}
\centering
\includegraphics[width=7cm]{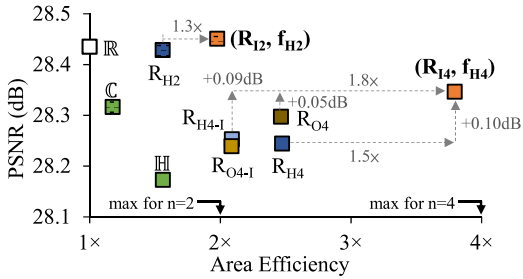} 
\caption{Area efficiency after logic synthesis versus PSNR for 8-bit fixed-point implementation.
These engines can compute one 3$\times$3 convolution layer in one cycle for 32 input and 32 output channels of 8-bit real-valued features.
Area efficiencies are calculated with respect to the real-valued engine, and PSNR is evaluated on SR4ERNet-B17R3N1.
}
\label{fig:fig_gate_vs_quality}
\end{figure}

\subsection{Image Quality on eRingCNN}

\textbf{Training setting and model selection.}
To show competitive image quality, we further train models using the polishment setting in Table \ref{tab:tab_trainingsetting} with two large datasets, DIV2K \cite{div2k} and Waterloo Exploration \cite{waterloo}.
We consider two throughput targets for hardware acceleration: \textit{HD30} for Full HD 30~fps and \textit{UHD30} for 4K Ultra-HD 30~fps.
For each throughput target and application scenario, we adopt the compact ERNet configuration for the real-valued eCNN in \cite{eCNN_2019}.
It has been optimized over model depth and width in terms of PSNR, and we build its corresponding RingCNN models with $(R_I, f_H)$.

\textbf{Floating-point models.}
The PSNR results are shown in Table \ref{tab:tab_psnr_floating}.
The RingCNN models show significant gains over the traditional CBM3D \cite{bm3d_2007} for denoising and VDSR for SR$\times$4.
Compared to the advanced FFDNet and SRResNet, the models for eRingCNN-n2 can outperform them with PSNR gains up to 0.15~dB at HD30 and have similar quality at UHD30.
With 75\% sparsity, the models for eRingCNN-n4 still give superior quality and only show noticeable PSNR inferiority for denoising at UHD30 due to the shallow layers.

\begin{table}
\centering
\caption{PSNR performance and comparison for models on eRingCNN.}
\includegraphics[width=8.7cm]{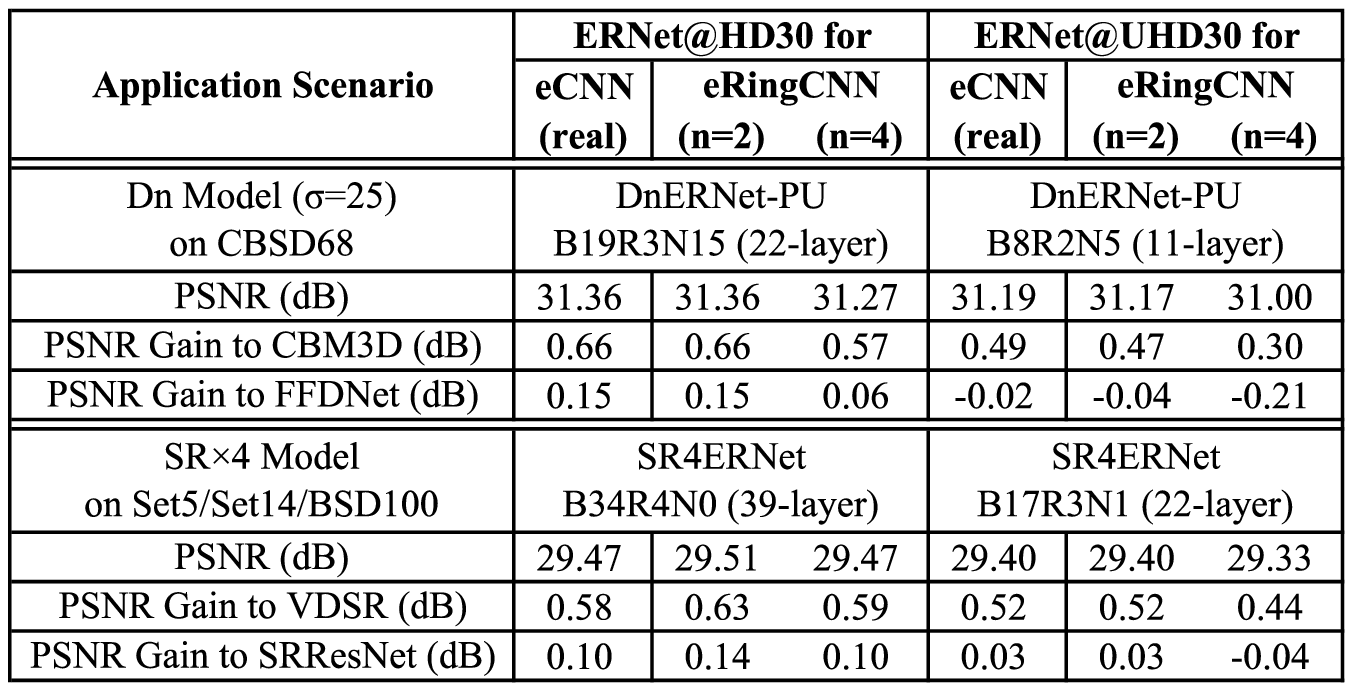}
\label{tab:tab_psnr_floating}
\end{table}

\textbf{Dynamic fixed-point quantization.}
On the top of Fig. \ref{fig:fig_avg_psnr}, we show the effect of the 8-bit dynamic quantization which is used to save area.
The quality degradation for ring tensors is around 0.11-0.12~dB of average PSNR drops, which is similar to the case of using real numbers.
We also show the effect of applying sparse ring algebras (eCNN$\Rightarrow$eRingCNN) at the bottom of Fig. \ref{fig:fig_avg_psnr}. 
The degradation is not obvious for $n=2$, and the models for eRingCNN-n2 even outperform those for eCNN by 0.01~dB on average.
For $n=4$, those for eRingCNN-n4 only suffer small PSNR degradation in 0.11~dB.

\begin{figure}
\centering
\includegraphics[width=8.7cm]{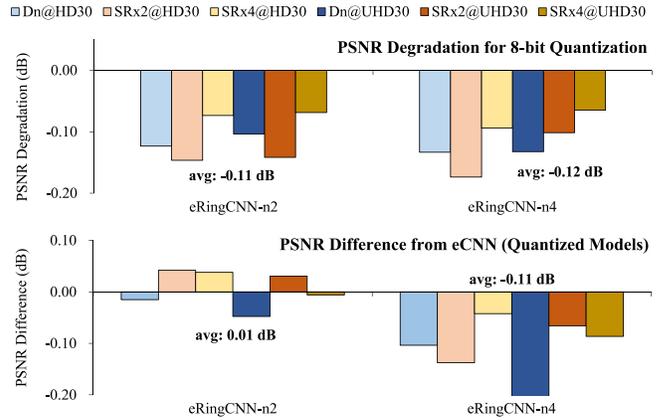}
\caption{Average PSNR results for six application targets:
(top) PSNR degradation of 8-bit quantized ERNet models from float-point ones and
(bottom) PSNR differences of quantized models for eRingCNN from those for eCNN.
The test datasets for denoising (Dn) are Set5, Set14, and CBSD68, and those for super-resolution (SR) are Set5, Set14, BSD100, and Urban100.
}
\label{fig:fig_avg_psnr}
\end{figure}

\subsection{Hardware Performance of eRingCNN}

\textbf{Implementation and CAD tools.}
We developed RTL codes in Verilog and verified the functional validity based on bit- and cycle-accurate simulations.
Then the verified RTL codes were synthesized using Synopsys Design Compiler with TSMC 40~nm CMOS library.
And SRAM macros were generated by ARM memory compilers.
We used Synopsys IC Compiler for placement and routing and generating layouts for five well-pipelined macro circuits which constitute eRingCNN collectively.
Finally, we performed time-based power consumption using Synopsys Prime-Time PX based on post-layout parasitics and dynamic signal activity waveforms from RTL simulation.

\textbf{Hardware performance.}
We show the design configurations and layout performance in Table \ref{tab:tab_eringcnn_spec}.
The areas of eRingCNN-n2 and eRingCNN-n4 are 33.73 and 23.36 $\mbox{mm}^2$ respectively, and the corresponding power consumptions are 3.76 and 2.22~W.
They mainly differ in the ring dimension, and eRingCNN-n4 uses only a half number of MACs and a half size of weight memory compared to eRingCNN-n2.

\begin{table}
\centering
\caption{Design configurations and layout performance of eRingCNN.}
\includegraphics[width=6cm]{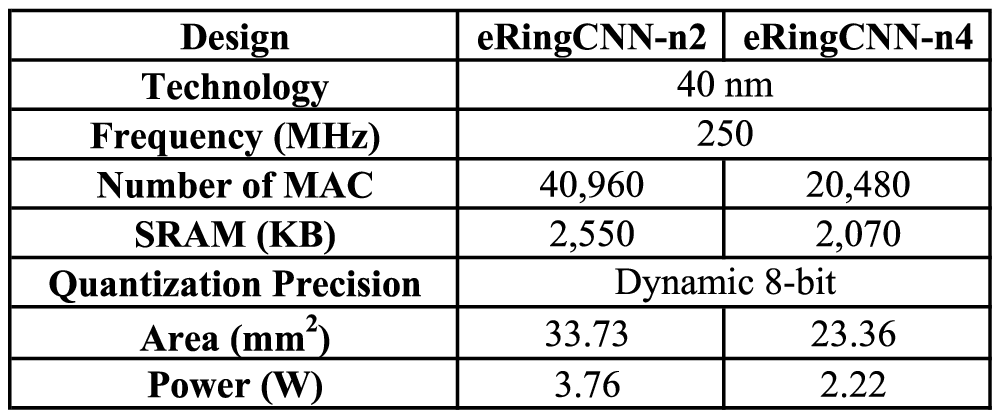}
\label{tab:tab_eringcnn_spec}
\end{table}

\textbf{Area and power breakdown.}
The details are shown in Table \ref{tab:tab_eringcnn_breakdown}.
For eRingCNN-n2, the convolution engines contribute 57.42\% of area and 86.51\% of power consumption for the highly-parallel computation.
And for eRingCNN-n4 their contributions go down to 45.63\% and 76.56\%, respectively, because of the saving of MACs.
In addition, for a larger $n$ the directional ReLU uses more adders and causes wider bitwidths.
Therefore, for the RCONV-3$\times$3 engines it occupies only 3.4\% of area for eRingCNN-n2 but up to 8.9\% for eRingCNN-n4.
Accordingly, the inference datapath in eRingCNN-n4 is also 0.53 $\mbox{mm}^2$ larger than that in eRingCNN-n2.


\begin{table}
\centering
\caption{Area and power breakdowns.}
\includegraphics[width=8.7cm]{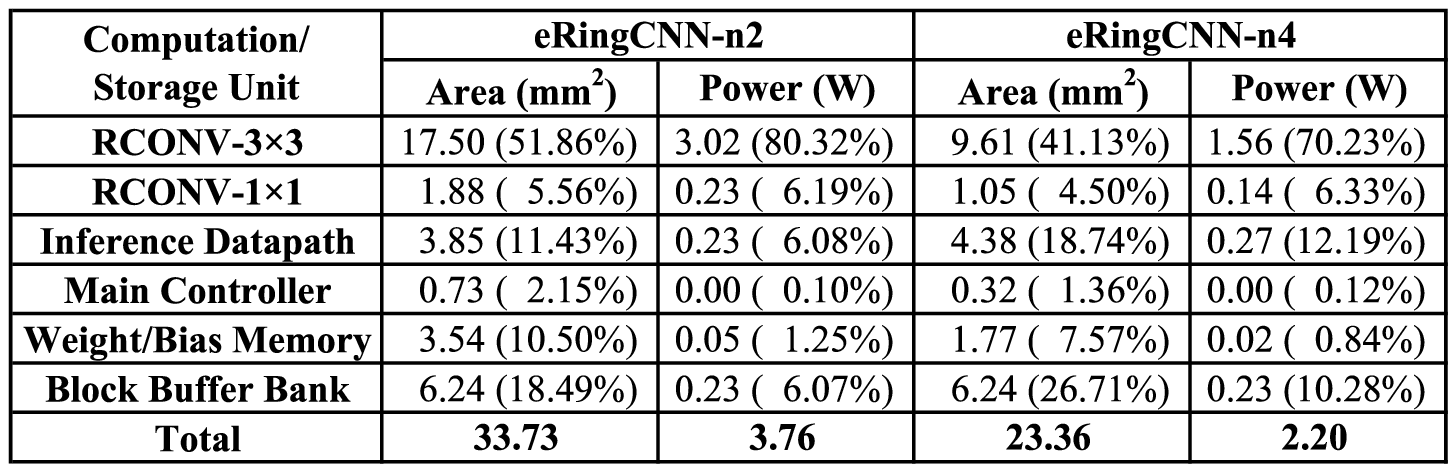}
\label{tab:tab_eringcnn_breakdown}
\end{table}

\textbf{Comparison with eCNN.}
As shown in Fig. \ref{fig:fig_comp_ecnn}, the RCONV engines reach near-maximum hardware efficiencies $(\cong n)$.
Those in eRingCNN-n2 achieve 2.08$\times$ area efficiency and 2.00$\times$ energy efficiency.
And those in eRingCNN-n4 further increase the efficiency gains of area and energy to 3.77$\times$ and 3.84$\times$.
The numbers for the whole accelerator are as high as 1.64$\times$ and 1.85$\times$ for eRingCNN-n2, and 2.36$\times$ and 3.12$\times$ for eRingCNN-n4.
In addition, we compare their quality-energy tradeoff curves in Fig. \ref{fig:fig_quality_vs_energy}.
The eRingCNN accelerators show clear advantages over eCNN; in particular, the low-complexity eRingCNN-n4 is preferred when less energy is allowed to be consumed for generating one pixel.
Finally, as eCNN, the eRingCNN accelerators demand only 1.93~GB/s of DRAM bandwidth for high-quality 4K UHD applications.

\begin{figure}[t]
\centering
\includegraphics[width=8.7cm]{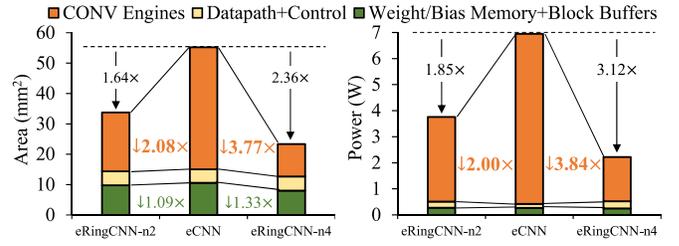}
\caption{Area (left) and power (right) comparison with eCNN.
}
\label{fig:fig_comp_ecnn}
\end{figure}

\begin{figure}[t]
\centering
\includegraphics[width=8.7cm]{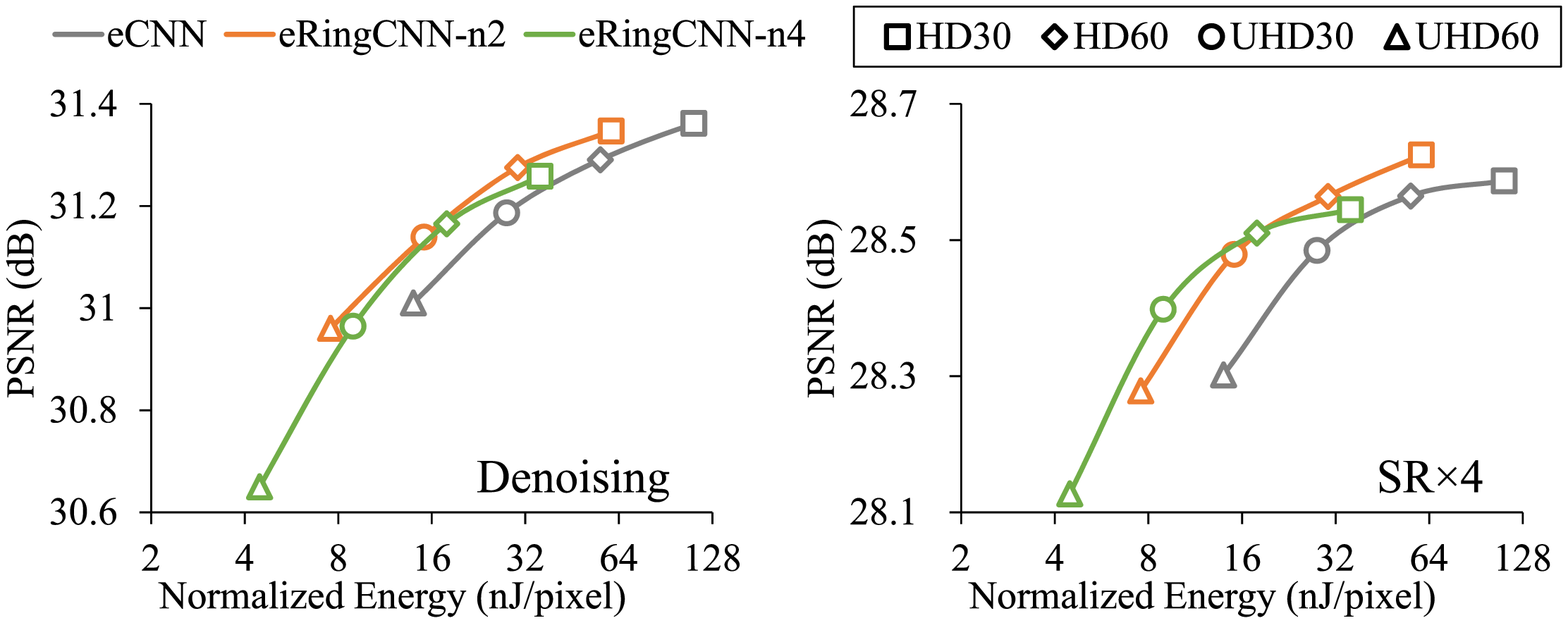}
\caption{Quality-energy comparison with eCNN: (left) denoising and (right) SR$\times$4. The energy is normalized for generating one image pixel.
Each accelerator forms its own curve with compact model configurations over different throughput targets.
}
\label{fig:fig_quality_vs_energy}
\end{figure}

\textbf{Comparison with Diffy.}
Table \ref{tab:tab_comp_ecnn_diffy} compares the hardware performance of Diffy\footnote{We project the silicon area and power consumption of Diffy under 40~nm technology based on the scaling comparison of 65~nm in \cite{tsmc_40nm}: 2.35$\times$ gate density and 0.5$\times$ power consumption under the same operation speed.} \cite{diffy_2018}, another state-of-the-art accelerator for computational imaging, along with eCNN.
Diffy applies optimization on bit-level computation which is hard to compare with eRingCNN directly; therefore, we perform comparison based on the same application target: FFDNet-level inference at Full-HD 20fps.
In this case, the energy efficiencies of eRingCNN-n2 and eRingCNN-n4 over Diffy are 2.71$\times$ and 4.59$\times$ respectively by running at 167~MHz.

\begin{table}[t]
\centering
\caption{Comparisons of eCNN and Diffy for computational imaging.}
\includegraphics[width=8.7cm]{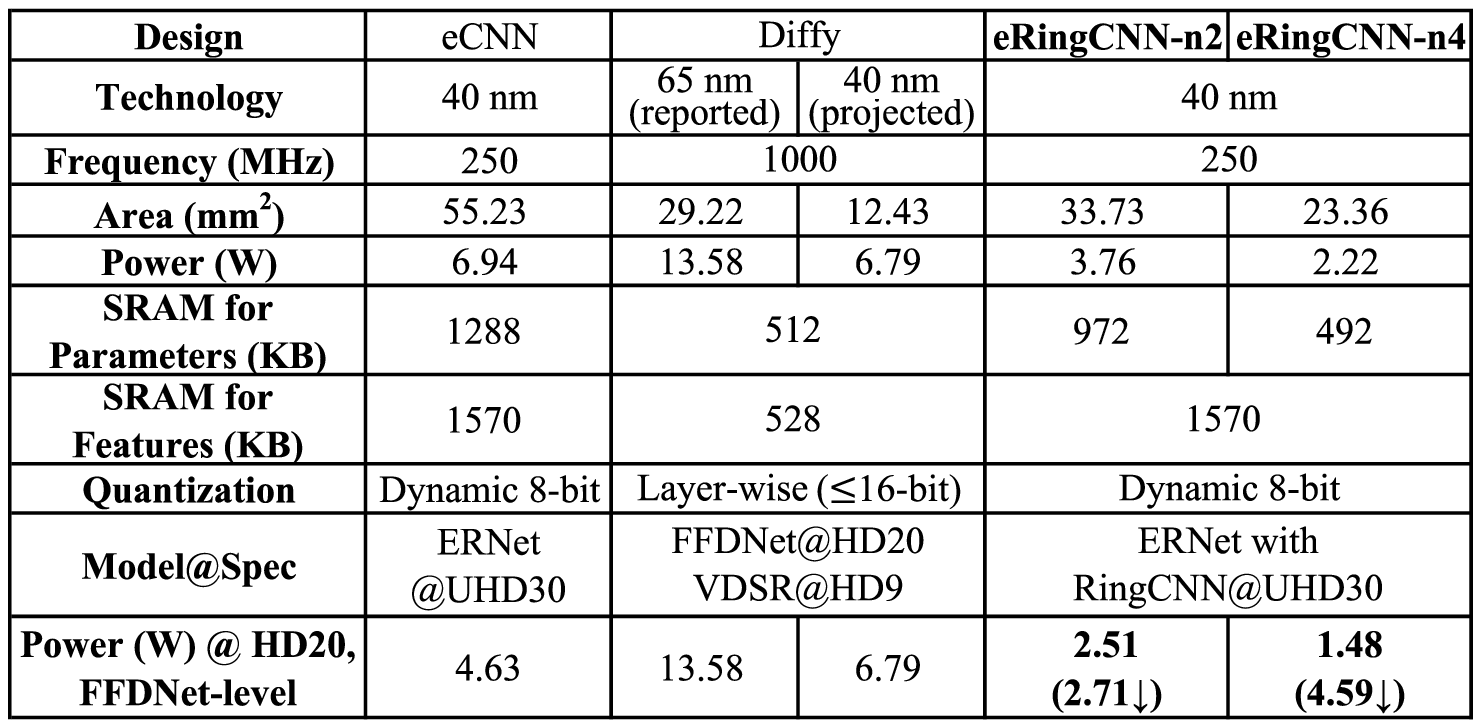}
\label{tab:tab_comp_ecnn_diffy}
\end{table}

\textbf{Comparison with SparTen, TIE, and CirCNN.}
Table \ref{tab:tab_comp_circnn} compares with the state-of-the-arts for different sparsity approaches: SparTen (natural) \cite{sparten_2019}, TIE (low-rank) \cite{tie_2019}, and CirCNN (full-rank).
Here we compare synthesis results because only such numbers are reported for SparTen and CirCNN.
For comparing over different compression ratios, we consider an \textit{equivalent} throughput which corresponds to the computing demand of the target uncompressed or real-valued model.
With only 2-4$\times$ compression, our eRingCNN accelerators already provide competitive energy efficiencies as equivalent 19.1-28.4~TOPS/W. 
In contrast, SparTen merely achieves 2.7~TOPS/W due to significant overheads for handling irregularity.
Although TIE is very efficient for highly-compressed fully-connected (FC) layers, it shows inefficiency for the CONV layers with lower compression ratios.
Finally, CirCNN only provides 10.0~TOPS/W using as high as 66$\times$ compression.
The potential of algebraic sparsity is therefore demonstrated, in particular on moderately-compressed CNNs for computational imaging.

\begin{table}[t]
\centering
\caption{Comparison of SparTen, TIE, and CirCNN (synthesis results).}
\includegraphics[width=8.7cm]{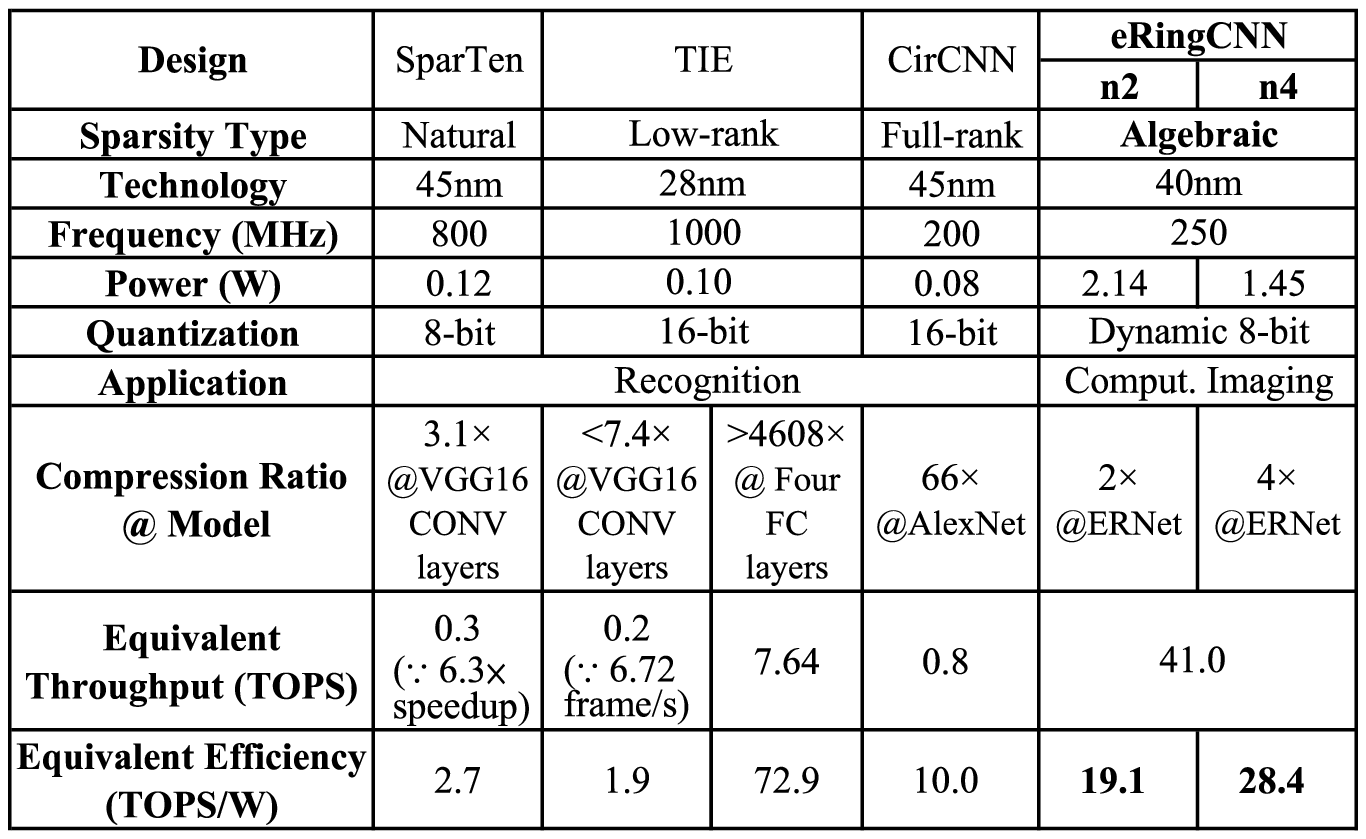}
\label{tab:tab_comp_circnn}
\end{table}

\section{Related Work}
\label{sec:related}

\textbf{Low-rank sparsity.}
This line of research also provides regular structure for efficient hardware acceleration.
One approach is low-rank approximation, including tensor-train (TT) \cite{tt_lowrank_2011}, canonical polyadic (CP) \cite{cp_lowrank_2015}, and Tucker \cite{tucker_lowrank_2016}.
Another one is building networks using low-rank structures, such as MobileNet (v1/v2) \cite{mobilenet_2017,mobilenetv2_2018} and SqueezeNet \cite{squeezenet_2017}.
However, this sparsity mainly aims to provide high compression ratios, and the effect for computational imaging need further exploration.

\textbf{Natural sparsity.}
Exploiting such sparsity in features has been well-studied, e.g.\ in Cnvlutin \cite{cnvlutin_2016}, Cambricon-{X} \cite{cambriconX_2016}, and Diffy \cite{diffy_2018}.
And the sparsity in filter weights is usually explored by pruning \cite{deepcompression_2016,sparsegran_2017}.
They can be further combined as in SCNN \cite{scnn_2017}, SparTen \cite{sparten_2019}, and SmartExchange \cite{smartexchange_2020}.
However, the natural irregularity could lead to high overheads for indexing circuits and load imbalance.


\textbf{Dense CNN accelerators.}
There are many accelerators proposed for general-purpose inference with high parallelism, such as TPU \cite{tpu_2017}, DNPU \cite{dnpu_2018}, ShiDianNao \cite{shidiannao_2015}, and Eyeriss (v1/v2) \cite{eyeriss_2016,eyerissv2_2019}.
However, the computation sparsity for computational imaging were seldom exploited.

\textbf{Block-based inference flows.}
They eliminate huge external memory bandwidth for feature maps, and two approaches were proposed to handle the boundary features across neighboring blocks.
One is feature reusing, such as fused-layer \cite{fusedlayer_2016} and {Shortcut Mining} \cite{scm_2019}, and the other one is recomputing, like eCNN \cite{eCNN_2019}.
In this paper, we adopt the latter only for the purpose of implementation and comparison.

\section{Conclusion}
\label{sec:conclusion}

This paper investigates the fundamental but seldom-explored algebraic sparsity for accelerating computational imaging CNNs.
It can provide local sparsity and global regularity at the same time for energy-efficient inference.
We lay down the general RingCNN framework by defining proper ring algebras and constructing corresponding CNN models.
By extensive comparisons with several rings, the proposed one with the directional ReLU achieves near-maximum hardware efficiency and the best image quality simultaneously.
We also design two high-performance eRingCNN accelerators for verifying practical effectiveness.
They can provide high-quality computational imaging at up to 4K UHD 30~fps while consuming only 3.76~W and 2.22~W, respectively.
Based on these results, we believe that RingCNN exhibits great potentials for enabling next-generation cameras and displays with energy-efficient and high-performance computational imaging.

\section*{Acknowledgment}
The author would like to thank the anonymous reviewers for their feedback and suggestions.
I also would like to thank Chi-Wen Weng for his help on layout implementation.


\bibliographystyle{IEEEtranS}
\bibliography{ref_CNN_20210205}

\renewcommand\thesubsection{\Alph{subsection}}
\renewcommand{\thefigure}{\Alph{subsection}-\arabic{figure}}
\renewcommand{\theequation}{\Alph{subsection}-\arabic{equation}}
\setcounter{figure}{0} 
\setcounter{equation}{0} 

\newtheorem{theorem}{Theorem}[subsection]
\newtheorem{lemma}[theorem]{Lemma}
\newtheorem{corollary}{Corollary}[theorem]

\section*{Appendices}
\label{sec:appendix}

\subsection{Minimal Algorithm for Diagonalizable $G$ over $\mathbb{R}$}
\label{ssec:decomp_g}

\begin{theorem}
Let $m$ be the number of real-valued multiplications in a fast algorithm for an isotropic matrix $G$. Then
\begin{enumerate}[(a)] 
\item The lower bound of $m$ is $\mbox{rank}(G)$.
\item If $G$ is diagonalizable over $\mathbb{R}$, there exists a minimal algorithm such that $m=\mbox{rank}(G)$.
\end{enumerate}
\end{theorem}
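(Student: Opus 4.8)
The plan is to treat the generic fast algorithm (\ref{equ:ring_fast1})--(\ref{equ:ring_fast3}) as an identity in both $g$ and $x$, and to read the rank of the isomorphic matrix $G$ directly off the rank-one structure it imposes. Writing $a_i^t$ and $b_i^t$ for the $i$-th rows of $T_g$ and $T_x$ and $c_i$ for the $i$-th column of $T_z$, the three steps compose into
\begin{align*}
z = T_z\big((T_g g)\circ(T_x x)\big) = \sum_{i=0}^{m-1} (a_i^t g)\, c_i\, b_i^t\, x,
\end{align*}
so that, since (\ref{equ:ring_iso_mul}) must hold for all $x$, the isomorphic matrix factors as $G = \sum_{i=0}^{m-1}(a_i^t g)\, c_i b_i^t$ for every $g$. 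This single structural identity drives both parts.

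Part (a) then follows immediately: each outer product $c_i b_i^t$ has rank at most one and $a_i^t g$ is only a scalar, so $G$ is a sum of $m$ rank-one matrices for every $g$, forcing $\mbox{rank}(G)\leq m$ pointwise. Evaluating at a $g$ that attains the generic (equivalently maximal) rank of $G$ gives $m \geq \mbox{rank}(G)$.

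For part (b) I would use the hypothesis in the form of a \emph{simultaneous} diagonalization: a single fixed real $T$ with $G = T^{-1} D T$ and $D = D(g)$ diagonal for all $g$. Substituting into $z = Gx$ yields $z = T^{-1}\big(d \circ (Tx)\big)$ with $d \triangleq \mbox{diag}(D)$. Because $G$ is linear in $g$ and $T$ is fixed, every eigenvalue $d_k$ is a linear functional of $g$, so $d = Sg$ for a fixed $S$; this already realizes the product in the form (\ref{equ:ring_fast1})--(\ref{equ:ring_fast3}) with $T_g = S$, $T_x = T$, $T_z = T^{-1}$, and $m = n$. To descend to $m = \mbox{rank}(G)$ I would prune the redundant channels: the rank of a diagonal matrix is its number of nonzero entries, and each $d_k$ either vanishes identically or is nonzero outside a hyperplane, so the generic rank equals the number $r$ of not-identically-zero eigenvalue functions. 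The remaining $n-r$ coordinates contribute nothing to $z = T^{-1}(d\circ(Tx))$; deleting the corresponding rows of $S$ and $T$ and columns of $T^{-1}$ produces transforms $T_g, T_x$ ($r\times n$) and $T_z$ ($n\times r$) that compute the product with exactly $m = r = \mbox{rank}(G)$ multiplications, matching the lower bound of part (a).

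The main obstacle is the bookkeeping in part (b), not any deep estimate. One must pin down that ``diagonalizable over $\mathbb{R}$'' is to be read as diagonalization by a common $T$ valid for all $g$---otherwise $d = Sg$ need not be linear and the algorithm would fall outside the admissible form---and then justify that the generic rank counts exactly the non-identically-zero eigenvalue functions. The latter is the only genuinely analytic point: it rests on lower semicontinuity of matrix rank, equivalently on the fact that a nonzero linear functional vanishes only on a hyperplane, so that the maximal rank is attained on a dense open set and pruning discards only channels that are zero for every $g$.
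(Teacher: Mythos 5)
Your proposal is correct and follows essentially the same route as the paper: part (a) is the same rank bound read off the factorization $G = T_z\,\mathrm{diag}(T_g g)\,T_x$ (you phrase it as a sum of $m$ rank-one matrices, the paper as $\mathrm{rank}(G)\leq\mathrm{rank}(T_z)\leq m$), and part (b) is the same construction $T_x = T$, $T_z = T^{-1}$, $T_g$ read off the linear dependence of $D$ on $g$. Your explicit pruning of identically-zero eigenvalue channels merely spells out what the paper compresses into the statement $m=\mathrm{rank}(D)=\mathrm{rank}(G)$.
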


\begin{proof}
(a) $m\geq\mbox{rank}(T_z)$ by the dimension theorem and (\ref{equ:ring_fast3}), and $\mbox{rank}(T_z)=\mbox{rank}(G)$ by (\ref{equ:ring_fast3}) and (\ref{equ:ring_iso_mul}). Therefore, we have $m\geq\mbox{rank}(G)$.

(b) Suppose $G=T^{-1} D T$ with an invertible matrix $T$ over real field and a diagonal matrix $D$ which has entries with indeterminates $g_j$.
We have
\begin{align}
z = Gx = \underbrace{T^{-1}}_{=T_z} \underbrace{D}_{=\tilde{g} \circ} \underbrace{T}_{=T_x} x,
 \label{equ:diagonalize}
\end{align}
and $T_g$ can be derived by examining $\tilde{g}_i = \sum_j (T_g)_{ij} g_j = D_{ii}$ for $i=0,...,m-1$.
This ring multiplication achieves the minimum complexity for $m=\mbox{rank}(D)=\mbox{rank}(G)$.
\end{proof}

\subsection{Associativity from Commutativity}
\label{ssec:proof_comm}

\begin{lemma}
\label{as_equi}
Let $R$ be a ring with a bilinear-form multiplication, and $a,b,c\in R$ with corresponding isomorphic matrices $A$, $B$, and $C$.
The the multiplication associativity of $R$ is equivalent to 
\begin{align}
C=AB \mbox{ if } c=a\cdot b, \forall a,b. \notag
\end{align}
\end{lemma}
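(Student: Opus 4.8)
The plan is to prove both implications by repeatedly invoking the isomorphism (\ref{equ:ring_iso_mul}), which asserts that for any ring elements $p,q$ the product $p\cdot q$ coincides with the matrix-vector product $Pq$, where $P$ is the isomorphic matrix of $p$. This single identity is the only tool I would need: it converts every occurrence of ring multiplication into an ordinary matrix action, after which the argument is pure linear algebra. I would state it explicitly at the outset so that each subsequent equality can be justified either by (\ref{equ:ring_iso_mul}) or by associativity of ordinary matrix multiplication.

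First I would treat the easy direction ($\Leftarrow$): assuming that $c=a\cdot b$ forces $C=AB$, I would deduce associativity. For arbitrary $a,b,d\in R$, set $c=a\cdot b$ so that $C=AB$ by hypothesis; then $(a\cdot b)\cdot d=c\cdot d=Cd=(AB)d=A(Bd)=A(b\cdot d)=a\cdot(b\cdot d)$, where the middle steps are matrix associativity and the outer steps are (\ref{equ:ring_iso_mul}). This yields $(a\cdot b)\cdot d=a\cdot(b\cdot d)$ for all $a,b,d$, which is associativity. For the converse ($\Rightarrow$), assume associativity and let $c=a\cdot b$; the strategy is to show that the two matrices $C$ and $AB$ induce the same linear map. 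For any $d\in R$, associativity gives $Cd=c\cdot d=(a\cdot b)\cdot d=a\cdot(b\cdot d)=A(Bd)=(AB)d$, so the two maps agree on every element.

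The step I expect to require the most care is the final move in the converse: passing from equality of the linear maps, $Cd=(AB)d$ for all $d$, to equality of the matrices $C=AB$ themselves. This is legitimate precisely because $R$ is the full set of real-valued $n$-tuples, $R=\{(x_0,\dots,x_{n-1})^t\}$, and therefore contains the standard basis vectors; evaluating on each basis vector recovers the corresponding column of each matrix, forcing $C=AB$ entrywise. I would emphasize that no property of the indexing tensor $M$ beyond the bilinear form is invoked, so the equivalence holds for every bilinear-form ring under consideration.
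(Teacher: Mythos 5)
Your proof is correct and follows essentially the same route as the paper's: both reduce associativity to the statement that $ABh=Ch$ for all $h$, and then use the fact that a matrix is determined by its action on all of $R=\mathbb{R}^n$ to conclude $C=AB$. The paper compresses this into a single chain of equivalences, while you spell out the two directions and the basis-vector argument explicitly; the content is the same.
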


\begin{proof}
Given $c=a\cdot b$, it is clear because $\forall h\in R, a\cdot (b\cdot h)=(a\cdot b)\cdot h$ $\iff$ $\forall h\in R, ABh=Ch$ $\iff$ $C=AB$.
\end{proof}

\begin{lemma}
\label{permu_equ}
Let $R$ follow the exclusive sub-project distribution. Then
\begin{enumerate}[(a)] 
\item $\forall a \in R$, its isomorphic matrix $A$ can be formulated by $A=\sum_k a_k E_k$ where $E_k$ is a signed permutation matrix.
\item For each standard-basis vector $e_k=I_{:,k}$, $E_k$ is its isomorphic matrix: $e_k \cdot x = E_k x, \forall x \in R$. 
\end{enumerate}
\end{lemma}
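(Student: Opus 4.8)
The plan is to work directly from the structural form of the isomorphic matrix guaranteed by the exclusive sub-product distribution, namely $G_{ij} = S_{ij}\, g_{P_{ij}}$ from (\ref{equ:g_simple}), and to prove both parts by reading off the coefficients of the indeterminate components $g_k$. No new machinery is needed; everything follows from coefficient matching together with the two permutation properties of $P$.

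For part (a), I would first apply this form to an arbitrary element $a$, so that its isomorphic matrix has entries $A_{ij} = S_{ij}\, a_{P_{ij}}$. I would then collect the coefficient of each component $a_k$: since $a_{P_{ij}}$ contributes $a_k$ exactly when $P_{ij}=k$, this yields $A = \sum_k a_k E_k$ with $(E_k)_{ij} = S_{ij}$ when $P_{ij}=k$ and $0$ otherwise. The remaining task is to verify that each $E_k$ is a signed permutation matrix. Here I would invoke both defining properties of $P$: because every row of $P$ is a permutation of $\{0,\dots,n-1\}$, each row $i$ contains exactly one index $j$ with $P_{ij}=k$, giving a single nonzero entry $S_{ij}=\pm 1$ per row; symmetrically, because every column of $P$ is a permutation, each column of $E_k$ has exactly one nonzero $\pm 1$. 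Together these establish $E_k$ as a signed permutation matrix.

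For part (b), I would compute the isomorphic matrix of the standard-basis element $e_k$ straight from the bilinear definition. Substituting $g=e_k$ into the entry formula $G_{ij}=\sum_l M_{ilj}\, g_l$ accompanying (\ref{equ:ring_iso_mul}) selects the single term $l=k$, so the isomorphic matrix of $e_k$ has entries $M_{ikj}$. It then suffices to show $M_{ikj} = (E_k)_{ij}$. I would obtain this by matching the two expressions for the general entry $G_{ij}$: equating $\sum_l M_{ilj}\, g_l$ with $S_{ij}\, g_{P_{ij}}$ as polynomials in the indeterminates $g_l$ forces the coefficient of $g_k$ to agree, so that $M_{ikj}$ equals $S_{ij}$ when $P_{ij}=k$ and $0$ otherwise, which is exactly $(E_k)_{ij}$. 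Hence $E_k$ is the isomorphic matrix of $e_k$, and $e_k\cdot x = E_k x$ for all $x\in R$.

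I do not anticipate a genuine difficulty; the single point requiring care is the signed-permutation claim in part (a), where both the row- and column-permutation properties of $P$ must be used. Each controls a different direction—one nonzero per row versus one per column—and omitting either would leave the conclusion incomplete. The identification in (b) is then immediate from the same coefficient matching already implicit in (\ref{equ:g_simple}), so once part (a) is set up carefully the rest is routine.
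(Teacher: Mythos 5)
Your proof is correct and follows essentially the same route as the paper: expanding $A_{ij}=S_{ij}a_{P_{ij}}$ by collecting coefficients of each $a_k$ to get $A=\sum_k a_k E_k$, and then specializing $a=e_k$ for part (b). You additionally spell out the verification that each $E_k$ is a signed permutation matrix via the row- and column-permutation properties of $P$, and you identify $(E_k)_{ij}=M_{ikj}$ en route to (b) --- details the paper states only as a remark or leaves implicit --- but the substance is the same.
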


\begin{proof}
(a) From (\ref{equ:g_simple}), we equivalently have $A=S \circ \left(\sum_k a_k F_k \right)$ where
$F_k$ are permutation matrices with $(F_k)_{ij} = \delta_{P_{ij} k}$.
Then we have $A=\sum_k a_k E_k$ where $E_k=S \circ F_k$.
Note that from (\ref{equ:ring_bilinear}) $E_k$ can be directly derived from the indexing tensor $M$ as $(E_k)_{ij}=M_{ikj}$.

(b) Let $a=e_k$. Then $A=E_k$ since $a_j = \delta_{jk}$.
\end{proof}

\begin{theorem}
\label{suff_asso}
The multiplication of $R$ is associative if $R$ has (i) the exclusive sub-product distribution, (ii) the commutative property of multiplication, and (iii) a commutative property of permutation matrices $E_k$ such that $E_k E_j = E_j E_k, \forall j,k$.
\end{theorem}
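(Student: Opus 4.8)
The plan is to reduce the claim to a single relation among the signed permutation matrices $E_k$ of Lemma \ref{permu_equ}, and then to verify it using only the symmetry that commutativity forces on $M$ together with the commutation relation (iii). First I would apply Lemma \ref{as_equi}: associativity is equivalent to $C = AB$ whenever $c = a\cdot b$. Writing $\Phi(x)$ for the isomorphic matrix of $x$, Lemma \ref{permu_equ}(a) gives that $\Phi$ is linear with $\Phi(a)=\sum_k a_k E_k$, and part (b) gives $\Phi(e_k)=E_k$ on the standard basis. Since the product \eqref{equ:ring_bilinear} is bilinear, $AB=\sum_{k,j}a_k b_j\,E_kE_j$ while $\Phi(a\cdot b)=\sum_{k,j}a_k b_j\,\Phi(e_k\cdot e_j)$, so it suffices to treat basis elements. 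Because $(e_k\cdot e_j)_i=M_{ikj}$ by \eqref{equ:ring_bilinear}, the whole theorem collapses to the core identity
\begin{align}
E_kE_j=\sum_i M_{ikj}E_i \qquad\forall\,k,j, \notag
\end{align}
which I will call $(\star)$.

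Next I would convert the hypotheses into algebraic facts. Hypothesis (i) is exactly what Lemma \ref{permu_equ} requires, and it records $(E_k)_{ij}=M_{ikj}$. Hypothesis (ii), commutativity, is equivalent to symmetry of $M$ in its last two slots, $M_{ikj}=M_{ijk}$ (compare $\sum_{k,j}M_{ikj}a_k b_j$ with the same sum after exchanging the two summation labels). Hypothesis (iii) is the commutation $E_kE_j=E_jE_k$. Written entrywise at row $i$ and column $c$, and relabelling the free indices of $(\star)$ as $a,b$, the identity $(\star)$ becomes the structure-constant relation
\begin{align}
\sum_k M_{iak}M_{kbc}=\sum_k M_{kab}M_{ikc} \qquad\forall\,i,a,b,c, \notag
\end{align}
whose left-hand side is already $(E_aE_b)_{ic}$; all the work is on the right-hand side.

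The heart of the argument --- and the step I expect to be the main obstacle --- is that the right-hand sum is not, as written, a matrix product in $k$, so (iii) cannot be applied to it directly. The key move is to use (ii) once to expose a genuine matrix product, swap it with (iii), and then use (ii) a second time to restore the required index pattern:
\begin{align}
\sum_k M_{kab}M_{ikc} &\overset{\text{(ii)}}{=} \sum_k M_{ick}M_{kab}=(E_cE_a)_{ib} \notag\\
&\overset{\text{(iii)}}{=} (E_aE_c)_{ib}=\sum_k M_{iak}M_{kcb} \overset{\text{(ii)}}{=} \sum_k M_{iak}M_{kbc}. \notag
\end{align}
The first equality rewrites $M_{ikc}=M_{ick}=(E_c)_{ik}$ so the sum is recognized as $(E_cE_a)_{ib}$; the middle equality is the commutation (iii); the last uses $M_{kcb}=M_{kbc}$. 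The result is $(E_aE_b)_{ic}$, i.e. the left-hand side, which establishes $(\star)$ and hence associativity. I expect the only genuine difficulty to be the index bookkeeping: a plausible worry beforehand is that one must descend into the explicit sign and permutation structure of \eqref{equ:g_simple} (as used for conditions C1 and C2), but the computation above shows that the bare symmetry $M_{ikj}=M_{ijk}$ and the commutation $E_kE_j=E_jE_k$ already suffice.
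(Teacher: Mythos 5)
Your proof is correct and follows essentially the same route as the paper: both reduce associativity to showing $C=AB$ via Lemma \ref{as_equi}, decompose over the standard basis via Lemma \ref{permu_equ}, and use commutativity twice to expose a genuine product of two $E$-matrices so that hypothesis (iii) can be applied once. The only difference is notational --- the paper evaluates the columns $Ce_j$ and $ABe_j$ at the level of ring elements, whereas you verify the equivalent structure-constant identity $E_kE_j=\sum_i M_{ikj}E_i$ entrywise in $M$ --- but the underlying argument is identical.
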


\newcommand\myeq{\mathrel{\overset{\makebox[0pt]{\mbox{\normalfont\tiny\sffamily (ii)}}}{=}}}
\begin{proof}
Let $a,b,c\in R$ and $c=a\cdot b$.
By Lemma \ref{permu_equ}, (i), and (ii), we have the $j$-th column of the isomorphic $C$ as 
$C_{:,j}=C e_j = (a\cdot b)\cdot e_j = e_j \cdot (a\cdot b) = E_jAb = \sum_k a_k E_j E_k b$.
Similarly, $(AB)_{:,j}=AB e_j = a\cdot (b\cdot e_j) = a\cdot (e_j \cdot b) = A E_j b = \sum_k a_k E_k E_j b$.
Then, we have $C=AB$ by (iii) and thus the associative property of multiplication by Lemma \ref{as_equi}.
\end{proof}

\begin{corollary}
\label{suff_diag}
The multiplication of $R$ is associative if $R$ has the properties (i) and (ii) and the isomorphic matrix is diagonalizable over $\mathbb{R}$.
\end{corollary}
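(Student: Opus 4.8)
The plan is to reduce to Theorem \ref{suff_asso} by showing that its third hypothesis---the commutativity $E_k E_j = E_j E_k$ of the signed permutation matrices---follows from the diagonalizability assumption. Since properties (i) and (ii) are given directly, establishing this commutation relation is all that remains.

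First I would recall from Lemma \ref{permu_equ}(a) that the isomorphic matrix of a generic element is $G = \sum_k g_k E_k$, with the components $g_k$ treated as indeterminates. The diagonalizability over $\mathbb{R}$ then provides a fixed invertible real matrix $T$, independent of the $g_k$, together with a diagonal $D$ satisfying $G = T^{-1} D T$, exactly as in the factorization used in Section \ref{ssec:fast_ringmul}. Conjugating by $T$ would give $\sum_k g_k (T E_k T^{-1}) = D$, which is a diagonal matrix for every value of the indeterminates.

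The crux of the argument is that this identity is linear in the $g_k$: specializing to $g_k = 1$ with all other components set to zero shows that each $T E_k T^{-1}$ is itself diagonal, say $D_k$. Thus the single matrix $T$ simultaneously diagonalizes all of the $E_k$. Since diagonal matrices commute, I would conclude $E_k E_j = T^{-1} D_k D_j T = T^{-1} D_j D_k T = E_j E_k$ for all $j,k$, which is precisely hypothesis (iii) of Theorem \ref{suff_asso}.

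Invoking that theorem with (i), (ii), and (iii) in hand then delivers the associativity of the multiplication of $R$, completing the argument. The one delicate point, which I expect to be the main obstacle, is the passage from diagonalizability of the parametrized matrix $G$ to simultaneous diagonalizability of the individual $E_k$; this hinges on $T$ being common to all values of the indeterminates, a property that the factorization $G = T^{-1} D T$ guarantees precisely because its left-hand side is linear in $g$.
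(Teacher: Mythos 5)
Your proposal is correct and follows essentially the same route as the paper's own proof: both reduce to Theorem \ref{suff_asso} by showing that the single diagonalizing matrix $T$ for the indeterminate-parametrized $G=\sum_k g_k E_k$ simultaneously diagonalizes each $E_k$ (the paper's ``setting $a=e_j$'' is exactly your specialization of the indeterminates), after which commutativity of diagonal matrices gives hypothesis (iii). Your explicit remark that the argument hinges on $T$ being common to all values of the indeterminates is a point the paper leaves implicit, but the substance is identical.
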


\begin{proof}
By Lemma \ref{permu_equ} and (i), for $a\in R$ we consider its isomorphic matrix $A$ which is diagonalizable: $A=\sum_k a_k E_k = T^{-1} D T$ where $D$ is a diagonal matrix in form of indeterminates $a_k$ as $D(a)$.
We can diagonalize each permutation matrix $E_j$ by setting $a=e_j$ and have $E_j=T^{-1} D(e_j) T$.
Then the commutative property of permutation matrices holds since the multiplication of diagonal matrices is commutative: $E_k E_j = T^{-1} D(e_k) T T^{-1} D(e_j) T = T^{-1} D(e_k) D(e_j) T = T^{-1} D(e_j) D(e_k) T = E_j E_k$. 
By Theorem \ref{suff_asso}, the multiplication of $R$ is thus associative.
\end{proof}

Among the proper rings found in Section \ref{ssec:proper_ring_mult}, $R_H$ and $R_{O4}$ directly have the associative property of multiplication by Corollary \ref{suff_diag} since they are diagonalizable over $\mathbb{R}$.
The rest of them ($\mathbb{C}$, $R_{H4\mbox{\scriptsize{-I}}}$, $R_{H4\mbox{\scriptsize{-II}}}$, $R_{O4\mbox{\scriptsize{-I}}}$, and $R_{O4\mbox{\scriptsize{-II}}}$) all possess the commutative property of permutation matrices and thus have the associative property as well by Theorem \ref{suff_asso}.

\subsection{Comparison with Weight Pruning for Recognition}
\label{ssec:vs_legr}

\begin{figure}[h]
\centering
\includegraphics[width=8.7cm]{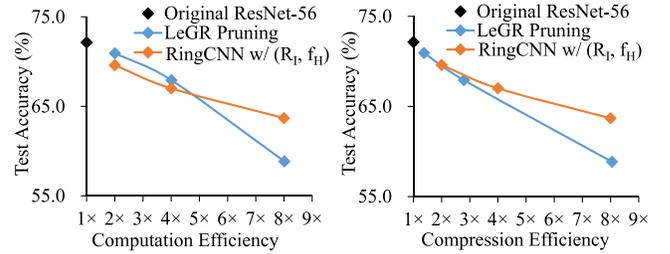} 
\caption{Computation efficiency (left) and weight compression efficiency (right) versus test accuracy for ResNet-56 on CIFAR-100 \cite{cifar10_100}.
Note that ResNet-56 for CIFAR-100 has smaller-size feature maps for deeper layers, so the efficiencies of computation and compression are not the same for LeGR.
}
\label{fig:fig_legr_comp}
\end{figure}

Recent pruning techniques for image recognition have mainly focused on structured pruning for their practical efficiency.
In Fig. \ref{fig:fig_legr_comp}, we compare RingCNN models to LeGR \cite{legr_prune_2020} which is the state-of-the-art for structured filter pruning.
RingCNN models show their potential by outperforming LeGR for high computation efficiency (left) and for a wide range of compression efficiency (right).
The study on hardware accelerators in this aspect will be our future work.

\end{document}